\pdfoutput=1
\documentclass[]{ant_tech_report}

\usepackage[utf8]{inputenc}
\usepackage{fontawesome5}
\usepackage{inconsolata}
\usepackage{url}
\usepackage{wrapfig}
\usepackage{adjustbox}
\usepackage{mathtools}
\usepackage{algorithm}
\usepackage{algpseudocode}
\usepackage{float}

\usepackage[utf8]{inputenc}
\usepackage[T1]{fontenc}
\usepackage{booktabs}
\usepackage{longtable}
\usepackage{amsfonts}
\usepackage{nicefrac}
\usepackage{microtype}
\usepackage{xcolor}
\usepackage{graphicx}
\usepackage{tikz}
\usepackage{subcaption}
\usepackage{xspace}
\usepackage{enumitem}
\usepackage{amsmath}
\usepackage{amssymb}
\usepackage{pifont}
\usepackage{listings}
\usepackage{tcolorbox}
\tcbuselibrary{breakable,listings}
\usepackage{textcomp}
\usepackage{upquote}
\usepackage{multirow}
\usepackage{colortbl}
\usepackage{makecell}
\usetikzlibrary{arrows.meta,positioning,fit}

\usepackage{array}
\newcolumntype{L}[1]{>{\raggedright\arraybackslash}p{#1}}
\newcolumntype{C}[1]{>{\centering\arraybackslash}p{#1}}
\newcolumntype{R}[1]{>{\raggedleft\arraybackslash}p{#1}}

\definecolor{forgeblue}{HTML}{E3F2FD}
\definecolor{forgelight}{HTML}{F5F7FA}
\definecolor{forgepurple}{HTML}{7E57C2}
\definecolor{forgegreen}{HTML}{43A047}
\definecolor{forgered}{HTML}{FB8C00}
\definecolor{forgegray}{HTML}{E0E0E0}
\definecolor{forgeprimary}{HTML}{1565C0}
\definecolor{forgeprimarydark}{HTML}{0D47A1}
\definecolor{forgecyan}{HTML}{00ACC1}
\definecolor{oursrowcolor}{HTML}{E3F2FD}
\definecolor{baselinerowcolor}{HTML}{F5F7FA}

\setlist[itemize]{leftmargin=*, topsep=2pt, itemsep=1pt}
\setlist[enumerate]{leftmargin=*, topsep=2pt, itemsep=1pt}


\newcounter{insight}

\newtcolorbox[auto counter]{promptbox}[1][]{
    top=15pt,
    bottom=15pt,
    left=20pt,
    right=20pt,
    colback=gray!5,
    colframe=black,
    fonttitle=\bfseries,
    coltitle=white,
    title=Prompt~\thetcbcounter: #1,
    breakable,
    fontupper=\ttfamily\small,
}

\newtcblisting[auto counter]{promptlistingbox}[1][]{
    top=15pt,
    bottom=15pt,
    left=20pt,
    right=20pt,
    colback=gray!5,
    colframe=black,
    fonttitle=\bfseries,
    coltitle=white,
    title=Prompt~\thetcbcounter: #1,
    breakable,
    listing only,
    listing options={
        basicstyle=\ttfamily\footnotesize,
        breaklines=true,
        breakatwhitespace=false,
        columns=fullflexible,
        keepspaces=true,
        showstringspaces=false,
        upquote=true,
        literate=*{_}{\_}{1}
    }
}

\AtBeginDocument{%
  \setlength{\abovedisplayskip}{5pt plus 1pt minus 2pt}%
  \setlength{\belowdisplayskip}{5pt plus 1pt minus 2pt}%
  \setlength{\abovedisplayshortskip}{3pt plus 1pt}%
  \setlength{\belowdisplayshortskip}{3pt plus 1pt}%
}
\usepackage{enumitem}
\setlist[itemize]{nosep}

\crefname{proposition}{Proposition}{Propositions}
\Crefname{proposition}{Proposition}{Propositions}
\crefname{lemma}{Lemma}{Lemmas}
\Crefname{lemma}{Lemma}{Lemmas}
\crefname{algorithm}{Algorithm}{Algorithms}
\Crefname{algorithm}{Algorithm}{Algorithms}

\newcommand{\method}{BPM}
\newcommand{\methodfull}{Byte-Prefix Marginalization}
\newcommand{\Vt}{\mathcal{V}^{\mathrm{t}}}
\newcommand{\Vs}{\mathcal{V}^{\mathrm{s}}}
\newcommand{\rmt}{\mathrm{t}}
\newcommand{\rms}{\mathrm{s}}
\newcommand{\bytes}{\mathrm{bytes}}

\definecolor{blockbg}{RGB}{243,247,240}
\definecolor{teacherbg}{RGB}{242,243,245}
\definecolor{bestbg}{RGB}{224,239,215}
\definecolor{sndbg}{RGB}{242,248,237}
\newcommand{\best}[1]{\cellcolor{bestbg}\textcolor{black!88}{\textbf{#1}}}
\newcommand{\snd}[1]{\cellcolor{sndbg}\underline{#1}}

\definecolor{promptframe}{HTML}{3E7A2E}
\definecolor{promptback}{HTML}{F4F9F0}
\renewtcblisting{promptbox}[1]{%
  breakable, enhanced, arc=2pt, boxrule=0.5pt,
  colframe=promptframe, colback=promptback,
  coltitle=white, fonttitle=\bfseries\small,
  title={#1}, top=4pt, bottom=4pt, left=6pt, right=6pt,
  listing only,
  listing options={
    basicstyle=\ttfamily\small, breaklines=true,
    breakatwhitespace=true, keepspaces=true, columns=fullflexible,
    breakindent=0pt, postbreak={},
  },
}

\definecolor{caseframe}{HTML}{55575A}
\definecolor{caseback}{HTML}{F7F8F7}
\newtcblisting{casebox}[1]{%
  enhanced, arc=2pt, boxrule=0.5pt,
  colframe=caseframe, colback=caseback,
  coltitle=white, fonttitle=\bfseries\small,
  title={#1}, top=4pt, bottom=4pt, left=6pt, right=6pt,
  listing only,
  listing options={
    basicstyle=\ttfamily\footnotesize, breaklines=true,
    keepspaces=true, columns=fullflexible,
    breakindent=0pt, postbreak={},
    aboveskip=0pt, belowskip=0pt,
  },
}

\title{Cross-Tokenizer On-Policy Distillation\\ via Byte-Prefix Marginalization}

\author[1,2]{Hao Wang}
\author[2,\dag]{Kun Yuan}
\author[2,3]{Wenlin Zhong}
\author[2]{Minglei Zhang}
\makeatletter
\g@addto@macro\authorlist{\\ \authorformat[4]{Han Xiao}}
\makeatother
\author[2]{Ming Sun}
\author[1,*]{Honggang Qi}

\affiliation[1]{University of Chinese Academy of Sciences}
\affiliation[2]{KwaiKAT Team}
\affiliation[3]{Zhejiang University}
\makeatletter
\g@addto@macro\affiliationlist{\\ \affiliationformat[4]{The Chinese University of Hong Kong}}
\makeatother

\renewcommand{\contributionlist}{\email{wanghao251@mails.ucas.ac.cn} \quad \email{yuankun03@kuaishou.com} \quad \email{hgqi@ucas.ac.cn}}
\newcommand{\authorfootnotes}{$^\dag$Project lead. \quad $^*$Corresponding author.}
\fancypagestyle{firststyle}{
    \fancyhf{}
    \fancyhead[L]{
        \raisebox{-12.5mm}[0pt][0pt]{%
            \vbox{\vskip 3mm\hbox{\includegraphics[height=10mm]{images/logo/kwaikat.png}}}%
        }%
    }
    \fancyfoot[L]{\raisebox{1.15\baselineskip}[0pt][0pt]{\parbox[b]{\textwidth}{\rule{2in}{0.4pt}\\[3pt]\authorfootnotes}}}
    \fancyfoot[C]{\thepage}
}
\makeatletter
\renewcommand\checkdata[2][]{%
  \if@checkdataempty
    \g@addto@macro\checkdatalist{\checkdatabutton{#1}{#2}}%
    \global\@checkdataemptyfalse
  \else
    \g@addto@macro\checkdatalist{\hskip 2.4mm\relax\checkdatabutton{#1}{#2}}%
  \fi
}
\makeatother
\checkdata[\faRoute\hspace{4pt}Project Page]{https://bpm-opd.github.io/}
\checkdata[\faCubes\hspace{4pt}Models]{https://huggingface.co/K1zE/BPM}
\checkdata[\faDatabase\hspace{4pt}Datasets]{https://huggingface.co/datasets/K1zE/BPM}

\abstract{Open-weight language models from different families exhibit complementary capabilities, motivating their consolidation into a compact student through on-policy distillation (OPD). However, full-vocabulary OPD typically assumes a shared tokenizer, while existing cross-tokenizer methods may discard teacher probability mass or assign it to student tokens with unrelated content. We introduce \methodfull{} (\method{}), which re-expresses the teacher's next-token distribution over the student vocabulary in a shared byte space. Specifically, \method{} assigns each teacher token's probability to the longest student token whose byte representation is a prefix of the teacher token's bytes, aggregates mass mapped to the same student token, and places otherwise unmatched mass in an explicit residual category. This produces a vocabulary-complete, byte-aligned, and mass-preserving target for dense OPD. The target exactly recovers the teacher-induced byte-prefix marginal when the relevant prefix does not span multiple teacher tokens (a condition satisfied at more than $99\%$ of training positions) and uses a mass-preserving, chain-factorized lower bound otherwise. 
Across Qwen3-32B, GLM-Z1-9B-0414, and MiniMax-M2.7 as teachers, \method{} consistently outperforms current cross-tokenizer methods on six mathematics and programming benchmarks, improving six-benchmark avg@8 by $3.7$--$6.6$ points over the strongest baselines.
}

\begin{document}
\maketitle

\vspace*{\stretch{1}}
\begin{center}
\noindent\begin{minipage}{\textwidth}
\centering
\includegraphics[width=0.98\textwidth]{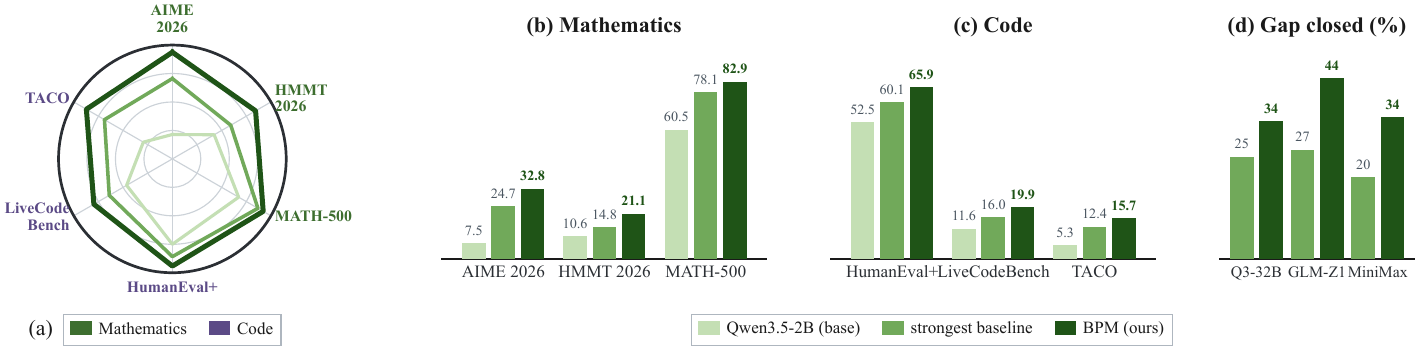}
{\captionsetup{font=small}
\captionof{figure}{Performance averaged over the three teacher--student pairs (\Cref{tab:main}). \textbf{(a)}~avg@8 on all six benchmarks, with each axis scaled independently; \textbf{(b)}~mean mathematics avg@8; \textbf{(c)}~mean code avg@8; \textbf{(d)}~fraction of each pair's teacher--student avg@8 gap closed by \method{} and its strongest baseline. \method{} leads all budget-matched on-policy baselines and the separately trained SeqKD reference.}
\label{fig:teaser}}
\end{minipage}
\end{center}
\vspace*{\stretch{2}}

\newpage
\section{Introduction}
\label{sec:intro}

\looseness=-1 Open-weight language models~\citep{deepseekai2026v4,qwen2025qwen3,kimi2026k25,glm2026glm5,minimax2026m2series} have advanced rapidly, with different model families exhibiting complementary strengths in mathematical reasoning, complex problem solving, agentic coding, and tool use. In practical deployments, it is often desirable to consolidate these diverse capabilities into a single compact model. Beyond capability integration, such consolidation can improve inference efficiency by reducing computational and memory overhead and avoiding the need to deploy or route requests among multiple specialized models. This motivates transferring knowledge from heterogeneous teacher models into an efficient student model~\citep{jiang2023llmblender}.

\looseness=-1 Knowledge distillation provides a natural mechanism for this transfer by training the student to imitate the teacher's predictive behavior~\citep{hinton2015distilling,kim2016sequence,xu2024surveykd,guha2025openthoughts}. On-policy distillation (OPD) performs this transfer along student-generated trajectories, querying the teacher at the states the student actually visits~\citep{agarwal2024onpolicy,gu2024minillm,arora2023exposure,wang2026skill}. Its supervision ranges from sampled-token and top-$k$ approximations~\citep{tml2025onpolicy,gu2024minillm,li2025bild,anshumann2025sparsekd} to full-vocabulary matching of the teacher's complete next-token distribution~\citep{agarwal2024onpolicy,qwen2025qwen3}.

\looseness=-1 However, dense token-level supervision typically assumes that the teacher and student use the same tokenizer~\citep{sennrich2016bpe}. This assumption restricts teacher selection to models compatible with the student's tokenization scheme, potentially excluding a stronger teacher for the target task~\citep{rust2021tokenizer}. Extending full-vocabulary OPD to heterogeneous tokenizers therefore requires the transformed target to satisfy three properties. \textbf{(D1) Vocabulary Completeness:} the target is well-defined for every token in the student vocabulary. \textbf{(D2) Byte-level Alignment:} the teacher's probability mass is mapped to student tokens according to the correspondence between their byte-level representations. \textbf{(D3) Mass Preservation:} the transformation preserves the teacher's total probability mass, ensuring that the resulting target distribution, including any explicitly represented residual mass, sums to one.

\looseness=-1 Existing cross-tokenizer distillation methods~\citep{wan2024fusellm,boizard2024uld,zhang2024dskd,cui2025multilevelot,minixhofer2025alm,patino2025gold,sun2026simct} make teacher and student distributions comparable by relaxing at least one of these requirements (\Cref{sec:related}). Candidate-set methods such as SimCT~\citep{sun2026simct} leave unmatched teacher mass untransferred, violating \textbf{(D1) Vocabulary Completeness}; rank matching in ULD~\citep{boizard2024uld} and GOLD~\citep{patino2025gold} may assign teacher mass to student tokens with unrelated byte content, violating \textbf{(D2) Byte-level Alignment}. SeqKD~\citep{kim2016sequence} instead avoids distribution matching and provides no full-vocabulary next-token target. These baselines therefore either approximate token-level transfer or avoid token-level distribution matching entirely. At the byte level, however, the transformation admits an exact solution.

\looseness=-1 We therefore formulate teacher supervision in a shared byte space~\citep{clark2022canine,xue2022byt5,yu2023megabyte,pagnoni2024blt}: although the teacher and student may tokenize a response differently, they agree on its underlying byte sequence (\Cref{fig:overview}). \methodfull{} (\method{}) transforms the teacher's next-token distribution into a distribution over the student vocabulary. Specifically, the probability mass of each teacher token is assigned to the longest student token whose byte representation is a prefix of the teacher token's byte representation. The mass assigned to the same student token is then aggregated across teacher tokens, while any mass for which no such student-token prefix exists is allocated to an explicit residual category. This residual is essential for mass preservation: without it, unmatched teacher mass would be discarded and the transformed target could sum to less than one. The resulting target therefore satisfies D1/D2/D3 by construction. At more than $99\%$ of training positions, this target exactly recovers the teacher's byte-prefix marginal. The student is then trained on its own generated trajectories by minimizing a generalized Jensen--Shannon divergence with respect to the transformed target.

\looseness=-1 Our experiments further identify a failure mode of byte-exact distillation. At positions where the target consists solely of whitespace in code-generation data, byte-level alignment can transfer tokenizer-specific segmentation patterns rather than the teacher's predictive behavior. For one teacher--student pair, this spurious supervision reduces the code pass rate from $49\%$ to $9\%$ (\Cref{sec:collapse}). We address this issue with a precomputed mask that excludes all-whitespace target positions from the distillation objective, thereby preventing the observed collapse.

In summary, our contributions are twofold:
\begin{itemize}
    \item \textbf{Exact full-vocabulary supervision across tokenizers.}
    We introduce \method{}, which re-expresses a teacher's next-token distribution over the student vocabulary while satisfying D1-D3 by construction. We prove that the resulting target is exact whenever the byte prefixes do not span multiple teacher tokens, a condition that holds at more than $99\%$ of the training positions in our experiments (\Cref{prop:scatter}). At the remaining spanning positions, \method{} uses a mass-preserving, chain-factorized lower bound.
    
    \item \textbf{A systematic evaluation across heterogeneous tokenizers.}
    We independently distill three teachers, including Qwen3-32B, GLM-Z1-9B-0414, and MiniMax-M2.7, into the same thinking-mode student architecture, covering teacher--student pairs with varying tokenizer distances. All on-policy methods share a training budget; SeqKD follows its standard offline recipe with no smaller budget. The best \method{} variant outperforms the strongest baseline for every teacher-student pair, closing $33.5\%$, $43.9\%$, and $34.4\%$ of the corresponding teacher-student performance gaps.

\end{itemize}

\section{Related Work}
\label{sec:related}

\subsection{Distillation for Language Models}
\looseness=-1 Language-model distillation began with matching the teacher's output distribution position by position~\citep{hinton2015distilling}. SeqKD carried this recipe to sequence models~\citep{kim2016sequence}. Other lines transfer intermediate features~\citep{romero2015fitnets}, reasoning rationales~\citep{hsieh2023distilling,magister2023teaching}, or a teacher-refined training corpus~\citep{gu2025miniplm,xiao2025ui,xiao2026ui}. \method{} builds on two refinements of the per-position recipe. MiniLLM replaces the forward KL with a reverse KL~\citep{gu2024minillm}. The student then concentrates on the teacher modes it can represent. Generalized on-policy distillation trains on the student's own generations~\citep{agarwal2024onpolicy,lin2020imitkd,li2026rethinking}. Its divergence interpolates between the two KLs, and later objectives span f-divergences and skew or contrastive variants~\citep{wen2023fdistill,ko2024distillm,ko2025distillm2}. On-policy distillation has reached production. KAT-Coder-V2 consolidates five specialized coding experts into one deployable model~\citep{katteam2026katcoderv2}. Its successor extends the same pipeline to repository-scale agentic tasks~\citep{katteam2026katcoderv25}. The token-level objectives in this line still compare teacher and student predictions within a shared vocabulary.

\subsection{Cross-Tokenizer Distillation}
\looseness=-1 Cross-tokenizer methods recover comparability through alignment, transport, projection, or span-level objectives. FuseLLM hard-aligns the two token sequences by minimum edit distance~\citep{wan2024fusellm}. The per-position comparison becomes one approximate token-to-token map. The Universal Logit Distillation loss sorts each distribution by probability and matches it rank by rank as an optimal-transport surrogate. The match discards which token each ranked probability belongs to~\citep{boizard2024uld}. Multi-level optimal transport extends this rank-based transport across token and sequence levels~\citep{cui2025multilevelot}. The GOLD implementation adds a generalized Jensen--Shannon term to the sorted loss~\citep{patino2025gold}. The term covers only the vocabulary the two models share. Dual-space distillation learns a projection into a common hidden space. The output-side correspondence is then only as reliable as the learned projection~\citep{zhang2024dskd}. SimCT compares the two models on short continuations both tokenizers can express, and supervises only a restricted set of positions. The rest of the student vocabulary receives no signal~\citep{sun2026simct}. Approximate likelihood matching is the closest prior objective and also supervises across tokenizers on aligned spans~\citep{minixhofer2025alm}. Its target for each span is one scalar likelihood, and a scalar cannot place mass on a specific student token. Zero-shot tokenizer transfer instead re-equips a model with a new vocabulary~\citep{minixhofer2025zett}.

\paragraph{Inference-time vocabulary projection.}
GaC, EVA, and UNITE project ensemble distributions onto a shared vocabulary~\citep{yu2024gac,xu2024eva,yao2025unite}. These projections operate only at inference time, whereas \method{} constructs an explicit training distribution over the full student vocabulary.

\section{Method}
\label{sec:method}

\begin{figure*}[t]
\centering
\includegraphics[width=0.96\textwidth]{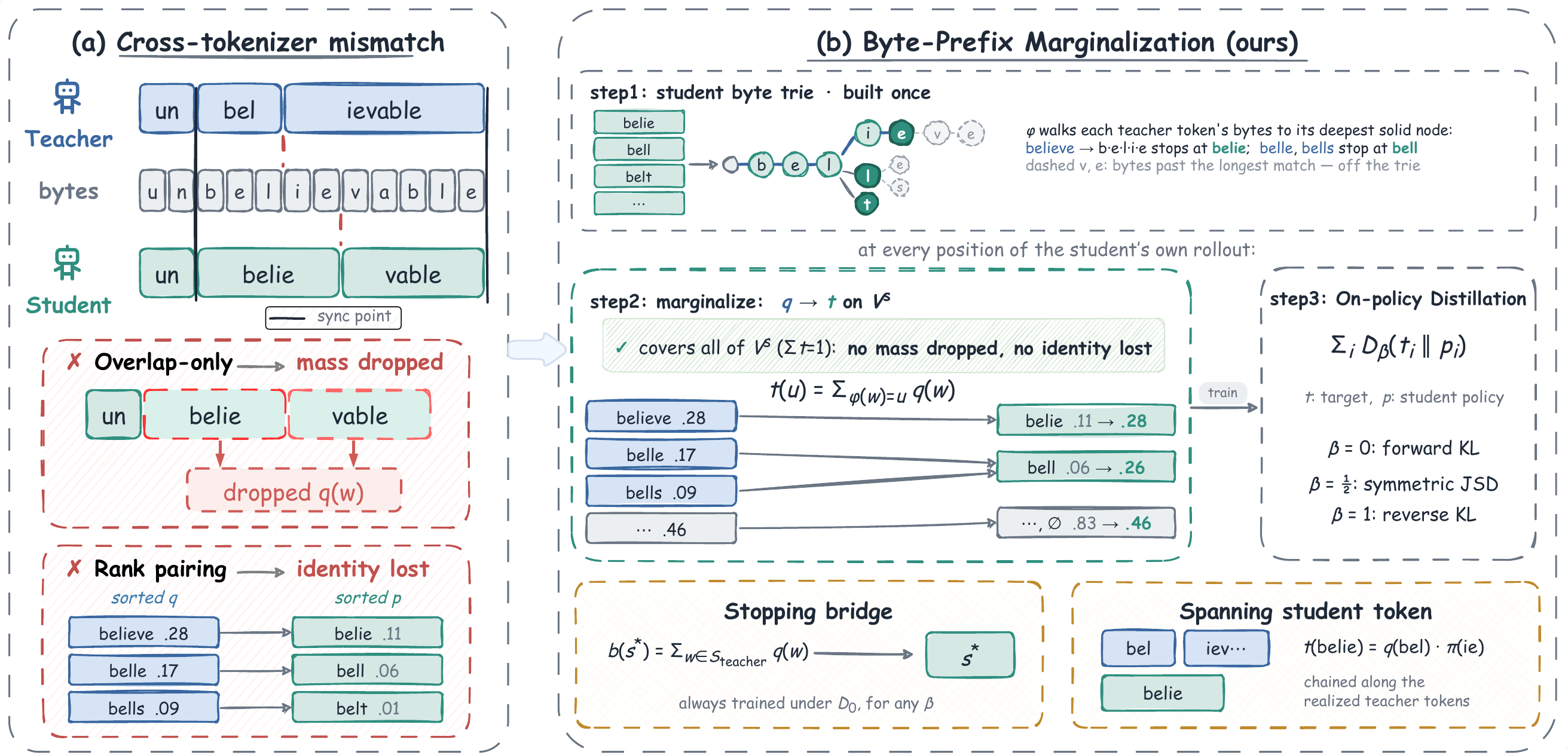}
\caption{\method{} overview. \textbf{(a)}~Both models cut the same bytes into different tokens (real segmentations of our most dissimilar pair); boundaries both models share are \emph{sync points} (black lines). \textbf{(b)}~The map $\phi$ routes each teacher token $w$ to the longest student token whose bytes prefix $\bytes(w)$. Probabilities routed to the same student token are summed into $t$ (here $.17{+}.09$ for \emph{bell}); unmatched mass goes to $\varnothing$, and $p$ is trained toward $t$ under $D_\beta$.}
\label{fig:overview}
\end{figure*}

\subsection{Cross-Tokenizer Setup}
\label{sec:setup-notation}

\looseness=-1 On-policy distillation queries the teacher along responses generated by the student. Let $\Vt$ and $\Vs$ denote the teacher and student vocabularies. At teacher position $j$, the teacher predicts $q_j \in \Delta(\Vt)$; at student position $i$, the student predicts $p_i \in \Delta(\Vs)$. When both models share a tokenizer, their positions and vocabulary coordinates coincide, giving the standard per-position objective:
\begin{equation}
\mathcal{L}_{\text{OPD}} \;=\; \textstyle\sum_i D\big(q_i \,\|\, p_i\big).
\label{eq:opd}
\end{equation}
Two conditions make this comparison meaningful: $q_i$ and $p_i$ live on the same vocabulary, and index $i$ denotes the same text prefix for both models.

\looseness=-1 Different tokenizers break both conditions. Their distributions occupy different vocabularies (\emph{vocabulary mismatch}), while equal position indices may refer to different text prefixes because token boundaries occur at different byte offsets (\emph{segmentation mismatch}). We therefore view each response as one byte string $\mathbf{x}$ with teacher segmentation $w^\rmt_1,\dots,w^\rmt_M$ and student segmentation $w^\rms_1,\dots,w^\rms_N$. Both decode to $\mathbf{x}$, but their boundaries and token counts differ. A boundary shared by both segmentations is a \emph{sync point}, and the bytes between consecutive sync points form a \emph{chunk}. At a sync point, student position $i$ aligns with the teacher position $j(i)$ beginning at the same byte offset. Throughout, assumption (A2) models student segmentation by greedy longest-prefix matching: at each offset, the student emits the longest token in $\Vs$ whose bytes prefix the remaining stream. Under this rule, a teacher prediction at an aligned position induces a distribution over the student's next token. The remaining task is to compute that distribution explicitly.

\subsection{Byte-Prefix Marginalization}
\label{sec:bpm-target}

\looseness=-1 Every byte-level BPE token deterministically decodes to a byte string. We write $\bytes(w)$ for the bytes of token $w$, $\mathbf{c}\preceq\mathbf{c}'$ when $\mathbf{c}$ is a prefix of $\mathbf{c}'$, and $\oplus$ for byte concatenation. Although the two tokenizers produce different token sequences, they decode each response to the same byte stream. \method{} uses this shared representation to marginalize teacher probability over the student token that greedy segmentation would emit first.

\vspace{-0.9\baselineskip}\paragraph{Definition.}
Fix an aligned student position $i$ and its corresponding teacher position $j(i)$. At this position, $q_{j(i)}(w)$ assigns probability to every possible teacher token $w$. Each candidate $w$ is mapped to the first student token obtained by greedily segmenting $\bytes(w)$. Candidates that do not contain enough bytes to determine a complete student token are assigned to an explicit residual category. The resulting target is defined as:
\begin{equation}
t_i(u) \;:=\; \Pr_{w \sim q_{j(i)}}\!\big[\text{first student token of } \bytes(w) = u\big].
\label{eq:target-def}
\end{equation}

\looseness=-1 Thus, $t_i(u)$ collects the probability of every teacher-token outcome that makes $u$ appear first under the student's greedy segmentation. This \emph{byte-prefix marginal} re-expresses the teacher's next-token distribution on the student vocabulary without matching token ids or probability ranks.

\vspace{-0.7\baselineskip}\paragraph{Closed form.}
\looseness=-1 The marginal has a one-step closed form in the \emph{refinement regime}, where each student token lies within a single teacher token and the student therefore segments at least as finely as the teacher. Whenever a candidate teacher token $w$ contains enough bytes to determine a complete student token, greedy matching selects the longest $u\in\Vs$ whose bytes prefix $\bytes(w)$. This defines:
\begin{equation}
\phi(w) \;=\; \operatorname*{arg\,max}_{u \in \Vs,\; \bytes(u) \preceq \bytes(w)} \lvert \bytes(u) \rvert,
\label{eq:phi}
\end{equation}

\looseness=-1 with $\phi(w)=\bot$ when no student token qualifies. The map $\phi$ can be compiled once for a tokenizer pair because it depends only on the two vocabularies. Grouping teacher outcomes by their mapped student token reduces \Cref{eq:target-def} to:
\begin{equation}
t_i(u) = \!\!\sum_{w:\,\phi(w)=u}\!\! q_{j(i)}(w),
\qquad
t_i(\varnothing) = \!\!\sum_{w:\,\phi(w)=\bot}\!\! q_{j(i)}(w).
\label{eq:target}
\end{equation}

\looseness=-1 The first sum pools all teacher tokens routed to $u$, while $\varnothing$ receives outcomes for which no complete student-token prefix exists. Every teacher token is assigned either to exactly one $u$ or to $\varnothing$, so the transformed mass still sums to one. Stop-token mass is handled separately in \Cref{sec:objective}.

\vspace{-0.7\baselineskip}\paragraph{Positions inside a teacher token.}
\looseness=-1 Refinement can place a later student position inside the same teacher token. Let $\mathbf{r}_{\mathrm{pre}}$ be the bytes already emitted since the preceding teacher boundary, and let $\pi(\mathbf{c})$ be the teacher mass on tokens beginning with $\mathbf{c}$. For each compatible token $w$, let $\phi_{\mathbf{r}_{\mathrm{pre}}}(w)$ be its longest student-token continuation, or $\bot$ if none exists. The conditional target is:
\begin{equation}
t_i(u \mid \mathbf{r}_{\mathrm{pre}})
 = \frac{\sum_{w:\,\mathbf{r}_{\mathrm{pre}}\preceq\bytes(w),\;
\phi_{\mathbf{r}_{\mathrm{pre}}}(w)=u} q(w)}{\pi(\mathbf{r}_{\mathrm{pre}})},
\qquad
t_i(\varnothing \mid \mathbf{r}_{\mathrm{pre}})
 = 1-\!\sum_{u\in\Vs} t_i(u \mid \mathbf{r}_{\mathrm{pre}}).
\label{eq:cond}
\end{equation}
\looseness=-1 Each compatible teacher token is counted once, so \Cref{eq:cond} is normalized; \Cref{app:notation} gives the equivalent two-cell scatter. Together, the aligned-boundary and within-token cases cover more than $99\%$ of training positions. \Cref{sec:mismatch} handles the remaining spanning cases.

\subsection{Spanning-Token Correction}
\label{sec:mismatch}

Refinement fails when one student token spans several teacher tokens. For example, the student tokenizer represents \texttt{</think>} as one token, whereas the teacher tokenizer splits the same bytes into three. The scatter in \Cref{eq:target} examines only teacher tokens at the aligned position, where almost no single token begins with the full bytes of \texttt{</think>}. It therefore assigns this student token near-zero mass, incorrectly teaching the student not to close its reasoning span.

The required spanning mass follows from the chain rule along the teacher's realized segmentation. Let student token $u$ begin at teacher position $j$, fully cover $w^\rmt_j,\dots,w^\rmt_{j+m-1}$, and end after a residual byte prefix $\mathbf{r}$ of the next teacher token. Write $\pi_k$ for the prefix mass from \Cref{eq:cond} evaluated at teacher position $k$. The chain value is defined as:
\begin{equation}
t_i(u) \;=\; \Big(\prod_{k=j}^{j+m-1} q_k\big(w^\rmt_k\big)\Big)\cdot \pi_{j+m}(\mathbf{r}),
\label{eq:chain}
\end{equation}
which is the probability that the teacher produces $\bytes(u)$ through this realized token path.

Other teacher segmentations may produce the same bytes, so \Cref{eq:chain} is a lower bound on the full byte marginal. We add this value to the target mass of $u$ and subtract the same amount from the closed-form target in \Cref{eq:target}, preserving total mass. Positions that neither begin at nor reach a sync point admit neither target and are excluded from the loss. Spanning corrections and excluded positions together account for less than one percent of response positions.

\subsection{Training Objective}
\label{sec:objective}

Training matches the transformed target with the student distribution through the generalized Jensen--Shannon divergence. For $0<\beta<1$, it is defined as:
\begin{equation}
D_\beta(t \,\|\, p) = \beta\, \mathrm{KL}(t \,\|\, m_\beta) + (1{-}\beta)\, \mathrm{KL}(p \,\|\, m_\beta),
\label{eq:jsd}
\end{equation}
with mixture $m_\beta = \beta t + (1{-}\beta) p$. We set $D_0(t\|p)=\mathrm{KL}(t\|p)$ (FKL) and use the skew reverse-KL endpoint $D_1(t\|p)=\mathrm{KL}\bigl(p\,\|\,(1{-}\lambda)t+\lambda p\bigr)$ with $\lambda=0.1$; \Cref{app:notation} gives their normalized-limit interpretation. The default is $\beta=\tfrac12$.

Each divergence uses the student's projection $\widetilde p_i$ onto the target's explicit token cells plus one residual complement, as defined in \Cref{app:notation}.
    
When the realized student token contains only spaces or tabs, \method{} withholds the target. This mask is precomputed from the student tokenizer and applied independently of the divergence. It covers about $3.5\%$ of aligned rows on the code half and is rare elsewhere, so in practice it acts almost entirely on code (\Cref{sec:collapse}).

Stop tokens require separate treatment because they carry no content bytes and therefore fall outside byte alignment. Moreover, the student's realized stop token need not equal the tokenizer's declared \texttt{eos}. We collect teacher and student stop sets $\mathcal{S}^\rmt, \mathcal{S}^\rms$ from the declared \texttt{eos}, generation config, and chat template, then remove those tokens from the content byte stream. Extending $\phi$ with $\phi(w)=s^*$ for every $w\in\mathcal{S}^\rmt$ maps all teacher stop tokens to the student's realized stop token $s^*$ for that response. The teacher's stopping probability can then join the target in \Cref{eq:target} at every aligned position and train under the same $D_\beta$ as content.

If stopping has already collapsed, an explicit bridge target supplies a restoring signal. At the position $i$ that emits $s^*$, \method{} defines:
\begin{equation}
b_i(s^*) = \!\sum_{w \in \mathcal{S}^\rmt}\! q_{j(i)}(w),
\qquad
b_i(\varnothing) = 1-b_i(s^*).
\label{eq:stop}
\end{equation}
The corresponding student projection is $\bar p_i=[p_i(s^*),1-p_i(s^*)]$. The bridge always trains under $D_0$, regardless of the run's $\beta$, because only this endpoint retains a restoring gradient after stopping collapses (\Cref{app:algo}). A healthy pipeline does not depend on the bridge.

Finally, let $\mathcal{R}$ collect the unmasked content positions and $\mathcal{R}'$ the stop positions of every response in the batch. The normalizer $Z$ counts all student response tokens, including masked and excluded positions that appear in neither sum. The complete objective is defined as:
\begin{equation}
\mathcal{L} \;=\; \tfrac{1}{Z} \textstyle\sum_{i \in \mathcal{R}} D_\beta(t_i \,\|\, \widetilde p_i)
\;+\; \tfrac{1}{Z} \textstyle\sum_{i \in \mathcal{R}'} D_0(b_i \,\|\, \bar p_i).
\label{eq:loss}
\end{equation}

\section{Experiments}
\label{sec:setup}

\subsection{Experimental Setup}
\label{sec:exp-setup}

\paragraph{Teacher--student pairs.}
To isolate the effect of the objective, every on-policy arm within a teacher--student pair uses the same student, prompt set, schedule, and update budget. Across the three pairs, only the teacher changes. We use Qwen3.5-2B~\citep{qwen2026qwen35card} as the fixed student, running this $248$k-token-vocabulary model in thinking mode for every rollout and evaluation. Tokenizer distance increases across the teachers. Qwen3-32B~\citep{qwen2025qwen3} shares the student's family and has span Jaccard $0.92$ between segmentations. GLM-Z1-9B-0414~\citep{glm2024chatglm,zai2025glmz1} comes from a different family yet stays close in segmentation (span Jaccard $0.87$), whereas MiniMax-M2.7~\citep{minimax2026m2series} sits farthest ($0.81$). We abbreviate the second teacher as GLM-Z1-9B and denote the pairs P1, P2, and P3, respectively, in later figures and tables. Because teacher capability also varies, all comparisons are made within a pair.

\vspace{-0.7\baselineskip}\paragraph{Training data.}
Training uses $20{,}000$ prompts, split evenly between mathematics and code. We draw $10$k problems with exact integer answers from DAPO-Math~\citep{yu2025dapo} and $10$k TACO problems~\citep{li2023taco}, each with a recovered executable test suite.

\vspace{-0.7\baselineskip}\paragraph{Implementation details.}
Except for SeqKD, every arm trains on-policy. Rollouts are sampled from the student at temperature $1.0$ under its release decode settings, with a maximum length of $27{,}648$ tokens. After each response is re-rendered in the teacher's chat template, the teacher scores the same generated content. The on-policy arms train for two epochs at batch size $256$ with a constant learning rate of $5{\times}10^{-7}$ on H200 SXM nodes. SeqKD follows its offline recipe with no smaller budget (\Cref{app:impl}). Unless stated otherwise, \method{} uses $\beta=\tfrac12$ and the whitespace-row mask (ablated below).

\vspace{-0.7\baselineskip}\paragraph{Baselines.}
We compare against the three cross-tokenizer baselines from Related Work and a sequence-level reference. The on-policy baselines retain their published losses and coefficients under the shared schedule. Rank-based \textbf{ULD}~\citep{boizard2024uld} matches sorted probability vectors under a transport loss. Hybrid \textbf{GOLD}~\citep{patino2025gold} applies a generalized Jensen--Shannon divergence on surface-matched tokens and the sorted loss on the remainder. Overlap-based \textbf{SimCT}~\citep{sun2026simct} applies reverse KL on the span-aligned shared-surface vocabulary. As the separately trained sequence-level reference, \textbf{SeqKD}~\citep{kim2016sequence} fine-tunes the student on teacher-generated responses to the same prompts, without constructing a distributional target. We evaluate \method{} with the three instantiations of \Cref{eq:jsd} (FKL, JSD, RKL) while holding the target fixed.

\vspace{-0.7\baselineskip}\paragraph{Evaluation protocol.}
Evaluation spans three mathematics and three code benchmarks. A symbolic verifier scores mathematics by accepting any equivalent expression, whereas code is evaluated by sandboxed execution against each benchmark's official tests and limits. The mathematics suite comprises MATH-500~\citep{hendrycks2021math,lightman2024verify} and the February 2026 AIME and HMMT contests~\citep{matharena2025}. For code, we use HumanEval+~\citep{liu2023evalplus}, the most recent official LiveCodeBench window~\citep{jain2025livecodebench}, and the official TACO test split~\citep{li2023taco} at its \textsc{easy}/\textsc{medium} labels. From $8$ samples per problem, we report \textbf{avg@8} and \textbf{pass@8}; each arm is represented by the checkpoint with the highest six-benchmark average. Decoding settings, metric definitions, and the paired-bootstrap protocol appear in \Cref{app:impl,app:ci}.

\begin{table*}[!t]
\centering
\small
\setlength{\tabcolsep}{1.6pt}
\renewcommand{\arraystretch}{1.0}
\begin{adjustbox}{max width=\textwidth}
\begin{tabular}{@{}l@{\,}l*{14}{c}@{}}
\toprule
 & & \multicolumn{6}{c}{Mathematics} & \multicolumn{6}{c}{Code} & \multicolumn{2}{c}{} \\
\cmidrule(lr){3-8} \cmidrule(lr){9-14}
 \multicolumn{2}{@{}l}{Method} & \multicolumn{2}{c}{AIME 2026} & \multicolumn{2}{c}{HMMT 2026} & \multicolumn{2}{c}{MATH-500} & \multicolumn{2}{c}{HumanEval+} & \multicolumn{2}{c}{LiveCodeBench} & \multicolumn{2}{c}{TACO} & \multicolumn{2}{c}{Avg.} \\
\cmidrule(lr){3-4} \cmidrule(lr){5-6} \cmidrule(lr){7-8} \cmidrule(lr){9-10} \cmidrule(lr){11-12} \cmidrule(lr){13-14} \cmidrule(lr){15-16}
 &  & avg@8 & pass@8 & avg@8 & pass@8 & avg@8 & pass@8 & avg@8 & pass@8 & avg@8 & pass@8 & avg@8 & pass@8 & avg@8 & pass@8 \\
\midrule
\multicolumn{2}{@{}l}{Qwen3.5-2B (base)} & 7.5 & 26.7 & 10.6 & 18.2 & 60.5 & 84.6 & 52.5 & 84.0 & 11.6 & 17.6 & 5.3 & 11.7 & 24.7 & 40.5 \\
\midrule
\rowcolor{teacherbg}\multicolumn{2}{@{}l}{Qwen3-32B} & 71.7 & 86.7 & 47.3 & 63.6 & 93.7 & 97.2 & 82.2 & 96.9 & 56.5 & 73.1 & 53.2 & 64.3 & 67.4 & 80.3 \\
\multicolumn{2}{@{}l}{\quad SimCT} & \snd{35.8} & \snd{56.7} & \snd{18.9} & \snd{30.3} & \best{83.4} & 93.6 & 56.4 & 87.7 & 10.6 & 24.7 & 6.7 & 21.2 & \snd{35.3} & \snd{52.4} \\
\multicolumn{2}{@{}l}{\quad ULD} & 7.5 & 20.0 & 6.1 & 15.2 & 60.7 & 89.0 & 55.9 & 79.8 & 12.0 & 25.3 & 8.2 & 19.1 & 25.1 & 41.4 \\
\multicolumn{2}{@{}l}{\quad GOLD} & 12.5 & 33.3 & 6.8 & 21.2 & 71.0 & 91.2 & \best{62.1} & \best{89.6} & 14.1 & 23.6 & 11.7 & 24.7 & 29.7 & 47.3 \\
\multicolumn{2}{@{}l}{\quad SeqKD} & 22.5 & 43.3 & 17.0 & 30.3 & 77.8 & \snd{94.2} & 50.8 & 81.0 & \best{18.8} & \snd{29.7} & \best{15.7} & \best{35.7} & 33.8 & 52.4 \\
\multicolumn{2}{@{}l}{\quad \method{}} & \best{37.5} & \best{73.3} & \best{23.5} & \best{39.4} & \snd{82.1} & \best{96.4} & \snd{61.8} & \snd{88.3} & \snd{15.7} & \best{33.0} & \snd{13.3} & \snd{35.0} & \best{39.0} & \best{60.9} \\
\multicolumn{2}{@{}l}{\quad\textcolor{gray}{$\Delta$ vs.\ best baseline}} & \textcolor{gray}{$+1.7$} & \textcolor{gray}{$+16.6$} & \textcolor{gray}{$+4.6$} & \textcolor{gray}{$+9.1$} & \textcolor{gray}{$-1.3$} & \textcolor{gray}{$+2.2$} & \textcolor{gray}{$-0.3$} & \textcolor{gray}{$-1.3$} & \textcolor{gray}{$-3.1$} & \textcolor{gray}{$+3.3$} & \textcolor{gray}{$-2.4$} & \textcolor{gray}{$-0.7$} & \textcolor{gray}{$+3.7$} & \textcolor{gray}{$+8.5$} \\
\midrule
\rowcolor{teacherbg}\multicolumn{2}{@{}l}{GLM-Z1-9B} & 63.3 & 90.0 & 33.3 & 48.5 & 92.3 & 97.4 & 89.8 & 96.9 & 45.6 & 63.2 & 53.5 & 64.0 & 63.0 & 76.7 \\
\multicolumn{2}{@{}l}{\quad SimCT} & 15.8 & 30.0 & 12.5 & 27.3 & 58.6 & 89.2 & 50.2 & 78.5 & 12.6 & 23.6 & 8.4 & 21.2 & 26.3 & 45.0 \\
\multicolumn{2}{@{}l}{\quad ULD} & 17.5 & 40.0 & 12.5 & 24.2 & 81.2 & 94.4 & \snd{66.0} & 88.3 & 15.8 & 25.8 & 12.4 & 27.2 & 34.2 & 50.0 \\
\multicolumn{2}{@{}l}{\quad GOLD} & 22.1 & \snd{53.3} & \snd{17.4} & \snd{33.3} & \snd{81.8} & \best{96.4} & 63.0 & \snd{89.0} & 9.0 & 23.6 & 4.8 & 15.5 & 33.0 & 51.9 \\
\multicolumn{2}{@{}l}{\quad SeqKD} & \snd{23.3} & 50.0 & 14.8 & 27.3 & 74.4 & 93.4 & 60.1 & 87.1 & \snd{20.1} & \snd{31.3} & \best{16.8} & \snd{34.3} & \snd{34.9} & \snd{53.9} \\
\multicolumn{2}{@{}l}{\quad \method{}} & \best{35.4} & \best{63.3} & \best{21.2} & \best{36.4} & \best{84.8} & \snd{95.4} & \best{68.8} & \best{91.4} & \best{22.3} & \best{33.5} & \snd{16.5} & \best{36.4} & \best{41.5} & \best{59.4} \\
\multicolumn{2}{@{}l}{\quad\textcolor{gray}{$\Delta$ vs.\ best baseline}} & \textcolor{gray}{$+12.1$} & \textcolor{gray}{$+10.0$} & \textcolor{gray}{$+3.8$} & \textcolor{gray}{$+3.1$} & \textcolor{gray}{$+3.0$} & \textcolor{gray}{$-1.0$} & \textcolor{gray}{$+2.8$} & \textcolor{gray}{$+2.4$} & \textcolor{gray}{$+2.2$} & \textcolor{gray}{$+2.2$} & \textcolor{gray}{$-0.3$} & \textcolor{gray}{$+2.1$} & \textcolor{gray}{$+6.6$} & \textcolor{gray}{$+5.5$} \\
\midrule
\rowcolor{teacherbg}\multicolumn{2}{@{}l}{MiniMax-M2.7} & 75.4 & 93.3 & 48.1 & 69.7 & 94.1 & 98.4 & 90.7 & 97.5 & 42.3 & 64.8 & 41.5 & 59.4 & 65.4 & 80.5 \\
\multicolumn{2}{@{}l}{\quad SimCT} & 3.3 & 10.0 & 6.1 & 15.2 & 61.2 & 82.2 & 61.2 & 85.9 & 11.3 & 18.1 & 7.0 & 14.8 & 25.0 & 37.7 \\
\multicolumn{2}{@{}l}{\quad ULD} & 15.0 & 36.7 & 10.6 & 21.2 & \snd{76.5} & \snd{91.2} & \snd{63.9} & 86.5 & \snd{17.3} & \snd{25.3} & \snd{13.6} & \snd{26.9} & \snd{32.8} & \snd{48.0} \\
\multicolumn{2}{@{}l}{\quad GOLD} & 9.2 & 30.0 & 6.1 & 18.2 & 69.6 & 87.6 & 63.5 & \snd{89.0} & 11.5 & 18.7 & 5.2 & 11.3 & 27.5 & 42.5 \\
\multicolumn{2}{@{}l}{\quad SeqKD} & \snd{18.3} & \snd{40.0} & \snd{11.0} & \snd{24.2} & 64.5 & 90.6 & 46.4 & 81.0 & 12.4 & 23.1 & 6.0 & 14.1 & 26.4 & 45.5 \\
\multicolumn{2}{@{}l}{\quad \method{}} & \best{25.4} & \best{46.7} & \best{18.6} & \best{39.4} & \best{81.8} & \best{95.6} & \best{67.1} & \best{92.0} & \best{21.8} & \best{34.1} & \best{17.4} & \best{34.6} & \best{38.7} & \best{57.1} \\
\multicolumn{2}{@{}l}{\quad\textcolor{gray}{$\Delta$ vs.\ best baseline}} & \textcolor{gray}{$+7.1$} & \textcolor{gray}{$+6.7$} & \textcolor{gray}{$+7.6$} & \textcolor{gray}{$+15.2$} & \textcolor{gray}{$+5.3$} & \textcolor{gray}{$+4.4$} & \textcolor{gray}{$+3.2$} & \textcolor{gray}{$+3.0$} & \textcolor{gray}{$+4.5$} & \textcolor{gray}{$+8.8$} & \textcolor{gray}{$+3.8$} & \textcolor{gray}{$+7.7$} & \textcolor{gray}{$+5.9$} & \textcolor{gray}{$+9.1$} \\
\bottomrule
\end{tabular}
\end{adjustbox}
\caption{Main results: avg@8 and pass@8 (\%). Tinted rows report teacher performance; the base row is the shared undistilled student. For \method{}, we report forward KL (FKL), which performs best among its three divergence variants on all three pairs (\Cref{tab:div}). Within each teacher block, dark shading and boldface mark the best distilled result in each metric column, while light shading and underlining mark the second best. Gray $\Delta$ rows report \method{} minus the strongest baseline in each column.}
\label{tab:main}
\end{table*}

\begin{table}[t]
\begin{minipage}[t]{0.545\textwidth}
\centering
\footnotesize
\renewcommand{\arraystretch}{0.95}
\setlength{\tabcolsep}{2.6pt}
\begin{adjustbox}{max width=\linewidth, valign=t}
\begin{tabular}{@{}l cc cc cc@{}}
\toprule
 & \multicolumn{2}{c}{Math (3)} & \multicolumn{2}{c}{Code (3)} & \multicolumn{2}{c}{All (6)} \\
\cmidrule(lr){2-3} \cmidrule(lr){4-5} \cmidrule(lr){6-7}
 & avg@8 & pass@8 & avg@8 & pass@8 & avg@8 & pass@8 \\
\midrule
\rowcolor{teacherbg}Qwen3-32B & 70.9 & 82.5 & 64.0 & 78.1 & 67.4 & 80.3 \\
\quad FKL & \best{47.7} & \best{69.7} & \best{30.3} & \best{52.1} & \best{39.0} & \best{60.9} \\
\quad JSD & 46.1 & 65.9 & 25.5 & \snd{49.7} & 35.8 & 57.8 \\
\quad RKL & \snd{47.0} & \snd{67.0} & \snd{26.6} & 49.6 & \snd{36.8} & \snd{58.3} \\
\midrule
\rowcolor{teacherbg}GLM-Z1-9B & 63.0 & 78.6 & 63.0 & 74.7 & 63.0 & 76.7 \\
\quad FKL & \best{47.1} & \best{65.0} & \best{35.9} & \best{53.8} & \best{41.5} & \best{59.4} \\
\quad JSD & \snd{43.1} & \snd{60.7} & \snd{34.2} & \snd{50.8} & \snd{38.6} & \snd{55.8} \\
\quad RKL & 42.5 & 59.4 & 31.1 & 49.9 & 36.8 & 54.6 \\
\midrule
\rowcolor{teacherbg}MiniMax-M2.7 & 72.5 & 87.1 & 58.2 & 73.9 & 65.4 & 80.5 \\
\quad FKL & \snd{41.9} & \best{60.6} & \best{35.4} & \best{53.6} & \best{38.7} & \best{57.1} \\
\quad JSD & \best{43.7} & 59.1 & \snd{23.8} & \snd{48.8} & \snd{33.8} & \snd{54.0} \\
\quad RKL & 38.5 & \snd{59.8} & 18.6 & 41.7 & 28.5 & 50.7 \\
\bottomrule
\end{tabular}
\end{adjustbox}
\captionof{table}{Divergence ablation for \method{}: forward KL (FKL, $\beta{=}0$), JSD ($\beta{=}\tfrac12$), and reverse KL (RKL, $\beta{=}1$, skew $0.1$). Math and Code average the corresponding three benchmarks in \Cref{tab:main}; All averages all six. Tinted rows report teacher performance, and highlighting follows \Cref{tab:main}. The P3 RKL result is measured at its pre-collapse peak (\Cref{app:instability}).}
\label{tab:div}
\end{minipage}\hfill
\begin{minipage}[t]{0.415\textwidth}
\centering
\footnotesize
\renewcommand{\arraystretch}{0.95}
\setlength{\tabcolsep}{3.2pt}
\begin{adjustbox}{max width=\linewidth, valign=t}
\begin{tabular}{@{}l cccc@{}}
\toprule
 & FKL & JSD & RKL & Avg. \\
\midrule
\multicolumn{5}{@{}l}{\emph{Original mix ($20$k)}} \\
\quad avg@8 & \best{41.5} & 38.6 & 36.8 & \snd{39.0} \\
\quad pass@8 & \best{59.4} & 55.8 & \snd{54.6} & \snd{56.6} \\
\midrule
\multicolumn{5}{@{}l}{\emph{Harder-math mix ($20$k)}} \\
\quad avg@8 & 39.4 & \best{38.9} & \snd{37.4} & 38.6 \\
\quad\textcolor{gray}{$\Delta$} & \textcolor{gray}{$-2.1$} & \textcolor{gray}{$+0.3$} & \textcolor{gray}{$+0.6$} & \textcolor{gray}{$-0.4$} \\
\quad pass@8 & 57.1 & \snd{57.0} & \best{55.1} & 56.4 \\
\quad\textcolor{gray}{$\Delta$} & \textcolor{gray}{$-2.3$} & \textcolor{gray}{$+1.2$} & \textcolor{gray}{$+0.5$} & \textcolor{gray}{$-0.2$} \\
\midrule
\multicolumn{5}{@{}l}{\emph{Expanded corpus ($35.5$k)}} \\
\quad avg@8 & \snd{40.7} & \best{38.9} & \best{37.6} & \best{39.1} \\
\quad\textcolor{gray}{$\Delta$} & \textcolor{gray}{$-0.8$} & \textcolor{gray}{$+0.3$} & \textcolor{gray}{$+0.8$} & \textcolor{gray}{$+0.1$} \\
\quad pass@8 & \snd{58.4} & \best{58.3} & 53.9 & \best{56.9} \\
\quad\textcolor{gray}{$\Delta$} & \textcolor{gray}{$-1.0$} & \textcolor{gray}{$+2.5$} & \textcolor{gray}{$-0.7$} & \textcolor{gray}{$+0.3$} \\
\bottomrule
\end{tabular}
\end{adjustbox}
\captionof{table}{Data ablations on the GLM-Z1-9B pair, reported as six-benchmark avg@8 and pass@8. FKL, JSD, and RKL are the three \method{} divergence variants; Avg. averages their columns. Gray $\Delta$ rows give the change from the original mix, and highlighting follows \mbox{\Cref{tab:main}}.}
\label{tab:datamix}
\end{minipage}
\end{table}

\subsection{Distillation Performance}
\label{sec:results}

Under the matched budget shared by the on-policy arms, the best \method{} variant beats all three distributional baselines on every pair and also exceeds the separately trained SeqKD reference (\Cref{tab:main,tab:div}). Across P1--P3, it closes $33.5\%$, $43.9\%$, and $34.4\%$ of the teacher--student gap, compared with $19.9$--$26.6\%$ for the strongest baselines (\Cref{fig:collapse}d). Improvements are largest on the contest benchmarks.

Even the default $\beta{=}\tfrac12$ arm outperforms the strongest baseline on every pair, and it stays ahead on the most dissimilar pair ($33.8$ against ULD's $32.8$). With the best divergence selected, the six-benchmark margins reach $+3.7$, $+6.6$, and $+5.9$ points (\Cref{tab:main}). All three reported best-arm average intervals exclude zero; \Cref{app:ci} specifies their comparators.

Two data ablations on the GLM-Z1-9B pair test whether the divergence ranking depends on the original prompt set (\Cref{tab:datamix}). Replacing the mathematics half with harder competition-style problems shifts the six-benchmark average by at most two points ($-2.1$ under FKL, $+0.3$ under JSD, $+0.6$ under RKL), while expanding the corpus from $20$k to $35.5$k prompts ($1.78\times$) shifts it by at most one ($-0.8$, $+0.3$, $+0.8$). Averaged across the three divergences, the respective changes are only $-0.4$ and $+0.1$. Both perturbations lower FKL and raise JSD and RKL, narrowing the gaps without changing the FKL $>$ JSD $>$ RKL ordering. At this budget, divergence choice therefore matters more than either data change.

\begin{wraptable}{r}{0.5\textwidth}
\vspace{-0.9\baselineskip}
\centering
\footnotesize
\setlength{\tabcolsep}{3.2pt}
\begin{adjustbox}{max width=0.495\textwidth}
\begin{tabular}{@{}lcccc@{}}
\toprule
Method & Tea.\ BpB\,$\downarrow$ & Shared\,$\uparrow$ & Byte agr.\,$\uparrow$ & Stu.\ BpB\,$\downarrow$ \\
\midrule
Base & $0.449$ & $0.930$ & $0.933$ & $0.242$ \\
SimCT & \best{$0.155$} & \snd{$0.952$} & \best{$0.977$} & $0.248$ \\
ULD & $0.231$ & $0.933$ & $0.963$ & $0.241$ \\
GOLD & \snd{$0.226$} & \best{$0.954$} & \snd{$0.969$} & \snd{$0.238$} \\
\method{} & $0.250$ & $0.943$ & $0.964$ & \best{$0.237$} \\
\bottomrule
\end{tabular}
\end{adjustbox}
\caption{Distributional closeness on the GLM-Z1-9B pair. Teacher BpB, shared teacher mass, and byte-level top-1 agreement are measured on student rollouts; student BpB is measured on teacher-written text. Lower BpB and higher shared mass or agreement are the nominal closer-to-teacher directions.}
\label{tab:closeness}
\vspace{-0.5\baselineskip}
\end{wraptable}%
\looseness=-1 Beyond accuracy, \Cref{tab:closeness} scores four distributional-closeness quantities on the GLM-Z1-9B pair. Three of them reward any student the teacher finds predictable, and a student collapsed into repetition is easy to predict. The most collapsed baseline (SimCT) therefore posts the best-looking teacher bits-per-byte ($0.155$). A collapsed student cannot game the fourth quantity, its own likelihood of teacher-written text. That quantity inverts the order. \method{} ranks first among the distilled arms ($0.237$), while SimCT falls below the base ($0.248$ against $0.242$). \Cref{app:closeness} extends the comparison to the other two pairs.

\looseness=-1 The marginalized stopping term of \Cref{sec:objective} folds the teacher's end-of-turn decision into the target, and the students inherit that decision. On the Qwen3-32B pair, \method{} cuts the fraction of AIME 2026 responses that hit the length cap from $97\%$ to $34\%$, against $49$--$62\%$ for the baselines. Response lengths otherwise stay unchanged. GLM-Z1-9B never caps, and its \method{} student reaches $8\%$. MiniMax-M2.7 runs $55\%$ of its own HMMT responses to the cap, and its students stay above $60\%$.

\subsection{Whitespace Collapse}
\label{sec:collapse}

Among code examples in the mixed on-policy rollouts, executable correctness rises through the first epoch before falling in the second. For the GLM-Z1-9B pair, the decline begins at step $113$ of $156$ and settles below $0.06$ (\Cref{fig:collapse}). Meanwhile, mathematics accuracy on the same rollouts remains stable, and the generated code remains syntactically valid. Closer inspection traces this selective loss of executable correctness to indentation.

The collapse concentrates at all-whitespace rows, where byte-exact supervision no longer conveys a lexical content preference. At a content row, target mass away from the student's realized token indicates what the teacher would rather write. At an all-whitespace row, by contrast, this mass reflects tokenizer-specific choices among whitespace continuations, conflating token segmentation with indentation depth. During the collapse on the GLM-Z1-9B pair, the target retains $73\%$ of its mass on the realized token on average, but this share falls below half on a quarter of the rows. The next-token entropy at these rows is only $0.53$.     

\begin{figure*}[!t]
\centering
\includegraphics[width=0.85\textwidth]{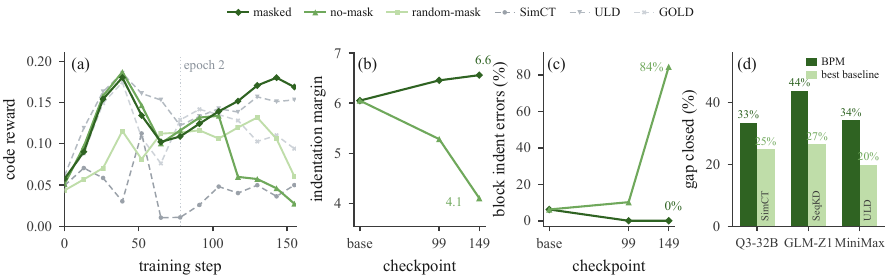}
\caption{(a--c)~Whitespace collapse for GLM-Z1-9B \method{}-JSD with/without whitespace-row masking: (a)~rollout code reward ($9$-step average; dotted: epoch 2), (b)~mean indentation margin over $1{,}799$ positions, and (c)~block indentation errors. (d)~Teacher--student avg@8 gap closure for each pair's best \method{} arm and strongest baseline (named in bars).}
\label{fig:collapse}
\end{figure*}

Training toward this target erodes the student's indentation margins, and line-level errors accumulate across each program. Block-level indentation errors rise to $84\%$, while the pass rate falls from $49\%$ to $9\%$ (\Cref{fig:collapse}b,c). Masking all-whitespace rows prevents the collapse: although the mask removes only about $3.5\%$ of aligned code rows, the otherwise identical masked arm finishes with an indentation margin of $6.57$ and no block-level indentation errors.

Three controls isolate the effect. The masked and unmasked arms differ only in the whitespace-row mask. Masking the same number of random non-whitespace rows per response, matched for loss reduction, does not rescue the arm; executable correctness still declines late and ends at a $0.11$ tail mean against $0.17$. SimCT, ULD, and GOLD show no comparable collapse under the same mix, and none receives the byte-exact target.

\section{Conclusion}
\label{sec:conclusion}
\looseness=-1 We distill on-policy across a tokenizer boundary. \method{} marginalizes the teacher's next-token distribution over byte prefixes and re-expresses it on the student's vocabulary. Each teacher token's mass is routed to its longest student-token byte prefix or to the residual, conserving total mass. Our experiments distill Qwen3-32B, GLM-Z1-9B, and MiniMax-M2.7 into one thinking-mode student. The three teachers sit at increasing tokenizer distance from the student. Under the budget shared by the on-policy arms, \method{} beats the overlap, rank, and hybrid baselines on every pair and also exceeds the separately trained SeqKD reference. The margins over each pair's strongest baseline run from $+3.7$ to $+6.6$ avg@8 points and are widest on contests held after every teacher's training cutoff. The target also carries the teacher's decision to stop, and the students inherit it. Fidelity backfires where the bytes carry no content. At whitespace-only code positions the target encodes only segmentation noise, and training on that noise collapses executable code. A precomputed whitespace-row mask removes the collapse. Both the routing map and the mask come from the two tokenizers alone, before training. Other cross-tokenizer objectives will face the same choice of where to apply fidelity and where to withhold it.

\vspace{-0.45\baselineskip}\paragraph{Limitations.}
\looseness=-1 First, \method{} discards part of the gradient. The whitespace-row mask withholds the target on about $3.5\%$ of aligned code rows. Positions that reach no sync point leave the loss entirely. Whatever signal these rows carry never reaches the student. Second, data quality bounds what our experiments can show. Whether the remaining teacher--student gap survives higher-quality data and tasks is untested. Such data can sharpen the student's next-token prediction, and sharper prediction amplifies on-policy distillation.

\bibliographystyle{unsrtnat}
\bibliography{references}

\clearpage
\appendix
\counterwithin{equation}{section}
\counterwithin{table}{section}
\counterwithin{figure}{section}

The appendix follows the dependency chain behind the main result. We first restate the notation and target definitions (\Cref{app:notation}), turn them into a complete training algorithm (\Cref{app:algo}), and then establish the exactness and mass-conservation claims (\Cref{app:proof}). The segmentation dispatch is made concrete through worked examples (\Cref{app:cases}), while a fixed teacher distribution exposes the differences among target constructions (\Cref{app:targets}). Finally, we document implementation cost, prompts, and evaluation details (\Cref{app:impl,app:prompts}) before expanding the empirical results (\Cref{app:extra-results}).

\section{Notation Recap and Restated Definitions}
\label{app:notation}

To make the later derivations self-contained, we begin with the shared byte representation. The teacher $T$ has vocabulary $\Vt$, the student $S$ has vocabulary $\Vs$, and every token $w$ from either byte-level BPE tokenizer decodes losslessly to $\bytes(w)$. We write $\mathbf{c} \preceq \mathbf{c}'$ when $\mathbf{c}$ is a byte prefix of $\mathbf{c}'$, and $\oplus$ for byte concatenation. At teacher position $j$, the teacher predicts $q_j \in \Delta(\Vt)$; at student position $i$, the student predicts $p_i \in \Delta(\Vs)$. A byte offset shared by both token boundaries is a \emph{sync point}. Student position $i$ at such an offset aligns with teacher position $j(i)$, and consecutive sync points delimit a \emph{chunk}. The teacher's \emph{prefix mass} at position $k$ is defined as:
\begin{equation}
\pi_k(\mathbf{c}) \;=\; \sum_{w \in \Vt:\, \mathbf{c} \preceq \bytes(w)} q_k(w),
\label{eq:s-prefixmass}
\end{equation}
the probability that its next token starts with $\mathbf{c}$; we drop the subscript when the position is clear.

With this notation fixed, the refinement target can be restated in three steps. First, the longest-prefix map from \Cref{eq:phi} is defined as:
\begin{equation}
\phi(w) \;=\; \operatorname*{arg\,max}_{u \in \Vs,\; \bytes(u) \preceq \bytes(w)} \lvert \bytes(u) \rvert,
\label{eq:s-phi}
\end{equation}
with $\phi(w)=\bot$ when no student token qualifies. Second, the target at an aligned student position $i$ from \Cref{eq:target-def} is defined as:
\begin{equation}
t_i(u) \;:=\; \Pr_{w \sim q_{j(i)}}\!\big[\text{first student token of } \bytes(w) = u\big],
\label{eq:s-def}
\end{equation}
Marginalizing the teacher outcomes through $\phi$ gives the third step, the closed scatter form from \Cref{eq:target}:
\begin{equation}
t_i(u) = \!\!\sum_{w:\,\phi(w)=u}\!\! q_{j(i)}(w),
\qquad
t_i(\varnothing) = \!\!\sum_{w:\,\phi(w)=\bot}\!\! q_{j(i)}(w).
\label{eq:s-scatter}
\end{equation}
The scatter above covers tokens resolved within one teacher step. When a student token $u$ instead starts at teacher position $j$, covers realized teacher tokens $w^\rmt_j,\dots,w^\rmt_{j+m-1}$, and extends through a residual byte prefix $\mathbf{r}$ of the next token, it receives the chain value from \Cref{eq:chain}:
\begin{equation}
t_i(u) \;=\; \Big(\prod_{k=j}^{j+m-1} q_k\big(w^\rmt_k\big)\Big)\cdot \pi_{j+m}(\mathbf{r}),
\label{eq:s-chain}
\end{equation}
For a student position beginning \emph{inside} a teacher token after realized bytes $\mathbf{r}_{\mathrm{pre}}$, let $\phi_{\mathbf{r}_{\mathrm{pre}}}(w)$ return the longest student token that prefixes the remaining bytes of compatible teacher token $w$, or $\bot$ if none exists. The exact conditional scatter is defined as:
\begin{equation}
\begin{aligned}
t_i(u \mid \mathbf{r}_{\mathrm{pre}})
&= \frac{\sum_{w:\,\mathbf{r}_{\mathrm{pre}}\preceq\bytes(w),\;
\phi_{\mathbf{r}_{\mathrm{pre}}}(w)=u} q(w)}{\pi(\mathbf{r}_{\mathrm{pre}})},\\
t_i(\varnothing \mid \mathbf{r}_{\mathrm{pre}})
&= \frac{\sum_{w:\,\mathbf{r}_{\mathrm{pre}}\preceq\bytes(w),\;
\phi_{\mathbf{r}_{\mathrm{pre}}}(w)=\bot} q(w)}{\pi(\mathbf{r}_{\mathrm{pre}})}.
\end{aligned}
\label{eq:s-cond}
\end{equation}
The two cells partition the conditioned teacher support and therefore sum to one without counting nested student-token prefixes more than once. Finally, let $\mathcal{S}^\rmt$ and $\mathcal{S}^\rms$ be the declared stop sets and $s^*$ the student's realized stop token for the response at hand. The stop bridge and its student-side binary projection from \Cref{eq:stop} are defined as:
\begin{equation}
\begin{aligned}
b_i(s^*) &= \!\sum_{w \in \mathcal{S}^\rmt}\! q_{j(i)}(w),
& b_i(\varnothing) &= 1-b_i(s^*),\\
\bar p_i(s^*) &= p_i(s^*),
& \bar p_i(\varnothing) &= 1-p_i(s^*).
\end{aligned}
\label{eq:s-stop}
\end{equation}

\paragraph{Divergence endpoints and projected support.}
The generalized Jensen--Shannon expression in \Cref{eq:jsd} is used for $0<\beta<1$. Because it vanishes at the endpoints, FKL is defined by the normalized limit $D_0(t\|p):=\lim_{\beta\to0}D_\beta(t\|p)/\beta=\mathrm{KL}(t\|p)$. The other normalized limit, $\lim_{\beta\to1}D_\beta(t\|p)/(1-\beta)=\mathrm{KL}(p\|t)$, motivates the numerically stable skew-RKL endpoint:
\begin{equation}
D_1(t\|p):=\mathrm{KL}\!\left(p\,\middle\|\,(1-\lambda)t+\lambda p\right),
\qquad \lambda=0.1.
\label{eq:s-endpoints}
\end{equation}
For a content row, let $\mathcal{I}_i$ contain the student tokens represented explicitly by its routing map and any spanning correction. The student distribution is projected onto these cells and one complement coordinate:
\begin{equation}
\widetilde p_i(u)=p_i(u)\quad(u\in\mathcal{I}_i),
\qquad
\widetilde p_i(\varnothing)=\!\sum_{v\in\Vs\setminus\mathcal{I}_i}\!p_i(v)
=1-\!\sum_{u\in\mathcal{I}_i}\!p_i(u).
\label{eq:s-projection}
\end{equation}
Thus $t_i$ and $\widetilde p_i$ have identical support and retain all teacher and student mass. The binary stop projection $\bar p_i$ is defined separately in \Cref{eq:s-stop}.

\section{Training Algorithm}
\label{app:algo}

\Cref{alg:bpm} gives the full training procedure: the per-pair precomputation, the per-response byte alignment, the per-position dispatch among the target constructions of \Cref{app:notation}, the whitespace-row mask, and the loss. The dispatch mirrors the case table of \Cref{app:cases} (\Cref{tab:cases}).

\begin{algorithm}[H]
\caption{\method{}: precomputation and one on-policy training step}
\label{alg:bpm}
\footnotesize
\begin{algorithmic}[1]
\Require teacher $q$ over $\Vt$, student $p_\theta$ over $\Vs$, prompt batch $\mathcal{B}$, mixing weight $\beta$
\Statex \textbf{Precompute once per tokenizer pair:}
\State build the byte trie of $\Vs$; for each $w \in \Vt$, descend $\bytes(w)$ to set $\phi(w)$ per \Cref{eq:s-phi}
\State read stop sets $\mathcal{S}^\rmt, \mathcal{S}^\rms$ from tokenizer, generation configuration, and chat template
\Statex \textbf{One training step:}
\State $\mathcal{L} \gets 0$
\For{each prompt $x \in \mathcal{B}$}
  \State sample response $y \sim p_\theta(\cdot \mid x)$
    \Comment{on-policy rollout}
  \State teacher forward on $y$ re-rendered in the teacher's chat template
    $\to q_1,\dots,q_M$; \ student forward $\to p_1,\dots,p_N$
  \State strip stop-token bytes from $y$; extend $\phi(w) \gets s^*$ for all
    $w \in \mathcal{S}^\rmt$, where $s^* \in \mathcal{S}^\rms$ ends $y$
  \State compute byte offsets of both segmentations; mark sync points and chunks
  \For{each student position $i = 1,\dots,N$}
    \If{$i$ is aligned and its token ends at or inside the aligned teacher token}
      \State $t_i \gets$ scatter of $q_{j(i)}$ through $\phi$
        \Comment{\Cref{eq:s-scatter}; $1{:}1$ and $1{:}N$ head}
    \ElsIf{$i$ starts inside a teacher token and its token ends at or inside that teacher token, realized prefix $\mathbf{r}_{\mathrm{pre}}$}
      \State $t_i \gets$ compatible teacher mass, normalized by $\pi(\mathbf{r}_{\mathrm{pre}})$, scattered through $\phi_{\mathbf{r}_{\mathrm{pre}}}$
        \Comment{\Cref{eq:s-cond}; $1{:}N$ interior}
    \ElsIf{the token at $i$ spans one or more teacher boundaries}
      \State $t_i \gets$ scatter of $q_{j(i)}$ through $\phi$
        \Comment{base row, \Cref{eq:s-scatter}}
      \State $t_i(u) \gets$ chain value of \Cref{eq:s-chain}, conditioned on
        $\mathbf{r}_{\mathrm{pre}}$ if the row starts mid-token;
      \State subtract the shallowest candidate's chain value from the head's
        scatter cell (or $\varnothing$ if unmapped), split across nested
        candidates by telescoping differences
        \Comment{mass-neutral; $N{:}1$ and $M{:}N$}
    \Else
      \State exclude $i$ from the loss
        \Comment{no sync point reachable; $<1\%$ of rows}
    \EndIf
    \If{$i$ emits $s^*$}
      \State $b_i(s^*) \gets \sum_{w \in \mathcal{S}^\rmt} q_{j(i)}(w)$
        \Comment{stop bridge, \Cref{eq:s-stop}; trained under $D_0$}
      \State $\bar p_i \gets [p_i(s^*),\,1-p_i(s^*)]$
    \EndIf
    \If{the whitespace mask is enabled and the realized token at row $i$ is entirely spaces or tabs}
      \State mask row $i$ out of the loss
        \Comment{whitespace mask; \Cref{sec:collapse}}
    \EndIf
  \EndFor
  \State $\mathcal{R} \gets$ unmasked content positions of $y$; \ $\mathcal{R}' \gets$ stop positions of $y$
  \State project each $p_i$ onto its explicit target cells plus the complement to obtain $\widetilde p_i$
  \State $\mathcal{L} \gets \mathcal{L} + \sum_{i \in \mathcal{R}} D_\beta(t_i \,\|\, \widetilde p_i)
    + \sum_{i \in \mathcal{R}'} D_0(b_i \,\|\, \bar p_i)$
    \Comment{\Cref{eq:loss}}
\EndFor
\State update $\theta$ with the gradient of $\mathcal{L}/Z$, where $Z$ is the
  batch's total student response-token count (masked and excluded positions
  included)
\end{algorithmic}
\end{algorithm}

\paragraph{Stopping-bridge gradient.}
Stop positions train under $D_0=\mathrm{KL}(b_i\,\|\,\bar p_i)$ rather than the generalized Jensen--Shannon divergence $D_\beta$ (\Cref{eq:jsd}) because only $D_0$ keeps a restoring gradient once the student's stop probability has collapsed. Take the gradient of the divergence in the student logit of the realized stop token $s^*$ as $p_i(s^*)\to 0$: under $D_\beta$ with $\beta\in(0,1)$ it vanishes, and a suppressed stop token receives no signal to recover, whereas under $D_0$ it approaches $-b_i(s^*)$, bounded away from zero whenever the teacher places stop mass at the aligned position (\Cref{eq:s-stop}). The $\phi$ stop-extension (\Cref{sec:objective}) already folds termination into the content target on a healthy run; the bridge adds the $D_0$ term to restore stopping in the suppressed regime.

\section{Exactness of the Byte-Prefix Marginal}
\label{app:proof}

Two questions underlie the closed form in \Cref{eq:s-scatter}: whether it preserves all teacher mass and whether it equals the byte-prefix marginal. We answer them in that order. After stating the byte-decoding and greedy-segmentation assumptions, we express the target as masses on a byte trie and prove that those masses telescope to one (\Cref{lem:telescope}). The scatter proposition then identifies this construction with the desired marginal (\Cref{prop:scatter}), before the final paragraph discusses the remaining gap between greedy longest-prefix matching and true BPE merge order.

\paragraph{Assumptions.}
\begin{itemize}
\item[(A1)] Every token $w$ decodes to a byte string $\bytes(w)$, and decoding a token sequence concatenates the byte strings losslessly (byte-level BPE).
\item[(A2)] At any byte position, the student tokenizer emits the longest token in $\Vs$ whose bytes prefix the remaining byte stream (greedy longest-prefix matching).
\end{itemize}
(A1) holds for the tokenizers we use by construction. (A2) idealizes BPE merge order; we argue its gap is second-order without proving a bound, and the end of this section reports that in position-by-position cross-checks we could not distinguish the two from exact tokenizer behavior.

\paragraph{Trie node-mass form.}
Fix an aligned student position $i$ and abbreviate $q = q_{j(i)}$, $\pi = \pi_{j(i)}$ (\Cref{eq:s-prefixmass}). Arrange $\Vs$ as a byte trie and let $\mathrm{tok}(\mathbf{c}) \in \Vs \cup \{\bot\}$ denote the longest student token whose bytes prefix $\mathbf{c}$'s path. The node-mass target assigns to each node $\mathbf{c}$ the mass $\rho(\mathbf{c})$ that ends exactly there:
\begin{equation}
\rho(\mathbf{c}) \;=\; \pi(\mathbf{c}) - \sum_{b=0}^{255} \pi(\mathbf{c} \oplus b),
\label{eq:residual}
\end{equation}
and credits it to $\mathrm{tok}(\mathbf{c})$, the deepest student-token ancestor on the path:
\begin{equation}
t_i(u) = \sum_{\mathbf{c}:\, \mathrm{tok}(\mathbf{c}) = u} \rho(\mathbf{c}),
\qquad
t_i(\varnothing) = \sum_{\mathbf{c}:\, \mathrm{tok}(\mathbf{c}) = \bot} \rho(\mathbf{c}).
\label{eq:trie-target}
\end{equation}

\begin{lemma}[Mass conservation]
\label[lemma]{lem:telescope}
For the assignment in \Cref{eq:residual,eq:trie-target}, $\sum_{u \in \Vs} t_i(u) + t_i(\varnothing) = 1$.
\end{lemma}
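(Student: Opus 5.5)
Since $\mathrm{tok}(\mathbf{c})$ takes exactly one value in $\Vs\cup\{\bot\}$ for every node $\mathbf{c}$, the trie nodes are partitioned by their credited cell. Summing \Cref{eq:trie-target} over all cells therefore gives
\[
\sum_{u\in\Vs} t_i(u)+t_i(\varnothing)=\sum_{\mathbf{c}}\rho(\mathbf{c}),
\]
where the right-hand sum runs over all byte strings. So the proof reduces to showing that the node masses $\rho$ of \Cref{eq:residual} sum to one, independently of the student vocabulary.

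I would show this directly from the definition of $\pi$ in \Cref{eq:s-prefixmass}, rather than by a formal telescoping over infinitely many nodes. The key identity is
\[
\pi(\mathbf{c})-\sum_{b=0}^{255}\pi(\mathbf{c}\oplus b)=\sum_{w:\,\bytes(w)=\mathbf{c}} q(w).
\]
To prove it, note that each teacher token $w$ with $\mathbf{c}\preceq\bytes(w)$ either has $\bytes(w)=\mathbf{c}$ or has $\bytes(w)$ strictly longer than $\mathbf{c}$. In the second case, $\bytes(w)$ extends $\mathbf{c}$ by exactly one next byte $b$, so $\mathbf{c}\oplus b\preceq\bytes(w)$ for a unique $b$. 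Hence the sum over $b$ counts exactly the strictly longer tokens, each once. This shows $\rho(\mathbf{c})\ge 0$ and that it is the mass of teacher tokens whose bytes are exactly $\mathbf{c}$.

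With that identity, the result follows quickly. Each $w\in\Vt$ contributes $q(w)$ to exactly one node, namely $\mathbf{c}=\bytes(w)$. Therefore $\sum_{\mathbf{c}}\rho(\mathbf{c})=\sum_{w}q(w)=1$. Equivalently, $\rho(\mathbf{c})$ vanishes off the finite set $\{\bytes(w):w\in\Vt\}$, so the sum is finite and no convergence issue arises. One could also phrase this as telescoping depth by depth, since $\pi(\varepsilon)=1$ and $\pi$ vanishes beyond the maximal token length, but the counting argument avoids index bookkeeping.

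The main obstacle is the edge cases rather than the algebra. First, the empty token should be excluded, or assigned to node $\varepsilon$ consistently; (A1) permits either. Second, stop tokens stripped of bytes must be handled. Under the $\phi$-extension of \Cref{sec:objective}, they are routed to $s^*$ rather than to node $\varepsilon$, so I would state that the lemma treats them as a separate cell carrying their own mass. With that convention, total mass is still one.
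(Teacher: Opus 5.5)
Your proof is correct and is essentially the paper's argument with the order of summation swapped. The paper fixes a token $w$ and lets its contribution $q(w)$ cancel along the root-to-node path until it survives only at $\bytes(w)$; you instead fix a node and show $\rho(\mathbf{c})=\sum_{w:\,\bytes(w)=\mathbf{c}}q(w)$, after which both arguments sum to $\sum_w q(w)=1$ and partition by $\mathrm{tok}(\mathbf{c})$. Your stop-token caveat is not needed for the lemma as stated, because the assignment in \Cref{eq:residual,eq:trie-target} involves only the byte strings $\bytes(w)$ guaranteed by (A1), and the $\phi$ stop extension modifies the scatter, not this trie assignment.
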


\begin{proof}
Every teacher token $w$ with $q(w)>0$ contributes $q(w)$ to $\pi(\mathbf{c})$ for exactly the prefixes $\mathbf{c} \preceq \bytes(w)$, which form one root-to-node path in the byte trie. Along this path, the node masses $\rho(\mathbf{c})$ of \Cref{eq:residual} telescope: each interior node's contribution $q(w)$ cancels between $\pi(\mathbf{c})$ and the child term $\pi(\mathbf{c}\oplus b)$ on the path, leaving $q(w)$ exactly once, at the node $\mathbf{c}=\bytes(w)$ (beyond which no prefix of $\bytes(w)$ continues). Hence $\sum_{\mathbf{c}} \rho(\mathbf{c}) = \sum_w q(w) = 1$, and \Cref{eq:trie-target} partitions this sum by $\mathrm{tok}(\mathbf{c}) \in \Vs \cup \{\bot\}$.
\end{proof}

\begin{proposition}[Scatter form of the marginal]
\label[proposition]{prop:scatter}
Assume (A1)--(A2). Let student position $i$ be aligned to teacher position $j(i)$, and suppose every student token inside the chunk is contained in a single teacher token (the student refines the teacher there). Then the byte-prefix marginal of \Cref{eq:s-def} equals the scatter form of \Cref{eq:s-scatter} for every $u \in \Vs \cup \{\varnothing\}$, and $\sum_{u} t_i(u) + t_i(\varnothing) = 1$.
\end{proposition}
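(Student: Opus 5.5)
The plan is to pass through the trie node-mass form of \Cref{eq:residual,eq:trie-target}, which connects the definitional marginal \Cref{eq:s-def} to the scatter \Cref{eq:s-scatter}; normalization then comes from \Cref{lem:telescope}. The argument has three steps, carried out for a fixed aligned position $i$ with $q=q_{j(i)}$ and $\pi=\pi_{j(i)}$.

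\textbf{Step 1 (definitional marginal to $\phi$).} Fix a teacher outcome $w$ with $q(w)>0$. By (A1), emitting $w$ at position $j(i)$ makes the next bytes of the stream begin with $\bytes(w)$. Suppose $w$ carries enough bytes to determine a complete student token. By (A2), the first student token of $\bytes(w)$ is then the longest $u\in\Vs$ with $\bytes(u)\preceq\bytes(w)$, which is exactly $\phi(w)$ in \Cref{eq:s-phi}.

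The refinement hypothesis is what makes this valid. Because every student token in the chunk lies inside a single teacher token, the greedy choice at offset $i$ never needs bytes past the end of $\bytes(w)$. Continuations of the stream after $w$ therefore cannot extend the longest match. If no student token prefixes $\bytes(w)$, the outcome is undetermined within one teacher step and is sent to $\varnothing$, matching $\phi(w)=\bot$. So the event ``first student token $=u$'' coincides with $\{w:\phi(w)=u\}$, and summing $q$ over this event gives \Cref{eq:s-scatter}.

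\textbf{Step 2 (scatter equals trie form).} Regroup the trie form by terminal nodes. The telescoping computation in the proof of \Cref{lem:telescope} gives
\[
\rho(\mathbf{c})=\sum_{w:\,\bytes(w)=\mathbf{c}} q(w).
\]
Moreover, $\mathrm{tok}(\bytes(w))=\phi(w)$, since both are defined as the deepest student-token ancestor on the path to $\bytes(w)$. Substituting into \Cref{eq:trie-target} recovers \Cref{eq:s-scatter} cell by cell, for every $u\in\Vs\cup\{\varnothing\}$. \Cref{lem:telescope} then yields $\sum_u t_i(u)+t_i(\varnothing)=1$. Alternatively, this also follows directly from the fact that $\phi$ partitions the support of $q$.

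\textbf{Main obstacle.} The delicate step is the justification in Step 1 that lookahead past the teacher token boundary cannot change the greedy choice. A priori, a longer student token could straddle the end of $\bytes(w)$ and the start of the next teacher token. That would make the first student token depend on later teacher outcomes, and the marginal would no longer factor through $\phi$. I would argue by contradiction: such a token would be a student token in the chunk not contained in a single teacher token, violating the refinement hypothesis. I would also state explicitly that the hypothesis is applied to the teacher outcomes being marginalized, which is what the chain correction of \Cref{sec:mismatch} exists to handle when it fails.
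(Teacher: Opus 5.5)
Your proposal is correct and follows essentially the same route as the paper. Both arguments identify the event in \Cref{eq:s-def} with $\{\phi(w)=u\}$ via (A2) and the refinement hypothesis, then get normalization either from the partition of $\Vt$ by $\phi$ or by crediting $\rho(\bytes(w))$ to $\mathrm{tok}(\bytes(w))=\phi(w)$ and invoking \Cref{lem:telescope}. You are right to say explicitly that the refinement hypothesis must cover the counterfactual outcomes $w$, not only the realized token $w^\rmt_{j(i)}$; the paper's step ``lies within $w^\rmt_{j(i)}$'s bytes, hence within $\bytes(w)$'' makes that same move without saying so.
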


\begin{proof}
Draw $w \sim q$. Because the student refines the teacher inside the chunk, the first student token emitted at position $i$ lies within $w^\rmt_{j(i)}$'s bytes, hence within $\bytes(w)$, and is determined by $w$ alone. Under (A2) it is the longest student token prefixing $\bytes(w)$, which is $\phi(w)$ of \Cref{eq:s-phi} by definition. The event in \Cref{eq:s-def} therefore coincides with $\{\phi(w)=u\}$, and taking probabilities under $q$ yields \Cref{eq:s-scatter}; the case $\phi(w)=\bot$ collects the teacher tokens no student token prefixes. Conservation follows either directly, since $\{\phi(w)=u\}_{u} \cup \{\phi(w)=\bot\}$ partitions $\Vt$, or from \Cref{lem:telescope}: the node mass $\rho(\bytes(w))$ is credited to $\mathrm{tok}(\bytes(w)) = \phi(w)$; hence \Cref{eq:trie-target} and \Cref{eq:s-scatter} agree token by token.
\end{proof}

\paragraph{The (A2) gap.}
Because BPE merge order occasionally emits a first token shorter than the longest matching prefix, (A2) is an idealization of counterfactual continuations. Two facts delimit its effect, though neither yields a proven bound. First, (A2) never changes \emph{which} token the student emitted (that token is observed), though it can perturb the target probability the marginal assigns to it and to its byte-prefix siblings. Second, the idealization only redistributes teacher mass among student tokens whose bytes prefix one another, and in position-by-position cross-checks we could not distinguish the two from exact tokenizer behavior.

\section{The Four Segmentation Relations, with a Worked Example}
\label{app:cases}

Within a chunk, the teacher-to-student token count takes one of four relations ($1{:}1$, $1{:}N$, $N{:}1$, $M{:}N$). \Cref{tab:cases} lists the five position cases they induce---$1{:}N$ contributes head and interior positions---with the target each receives and its guarantee; the paragraphs below give the constructions and a numeric walk-through. Ratios are written teacher\,:\,student.\footnote{The reference implementation labels the same cases in the opposite order, student\,:\,teacher.}

\begin{table}[t]
\centering
\setlength{\tabcolsep}{6pt}
\begin{tabular}{@{}llll@{}}
\toprule
Relation & Example & Target & Guarantee \\
\midrule
$1{:}1$ & \texttt{at} $\to$ \texttt{at} & \Cref{eq:s-scatter} & exact (\Cref{prop:scatter}) \\
$1{:}N$ head & \texttt{ate} $\to$ \texttt{at}\,$\cdot$\,\texttt{e} & \Cref{eq:s-scatter} & exact (\Cref{prop:scatter}) \\
$1{:}N$ int. & the \texttt{e} above & \Cref{eq:cond-interior} & exact \\
$N{:}1$ & \texttt{</think>} spans 3 & \Cref{eq:s-chain} & realized-path \\
$M{:}N$ & mid-start span & \Cref{eq:s-chain} & realized-path \\
\bottomrule
\end{tabular}
\caption{Target construction for each teacher-to-student segmentation relation. ``Realized-path'' targets are exact along the realized teacher segmentation and therefore lower-bound the full byte marginal for every spanning token, whether emitted or counterfactual (\Cref{sec:mismatch}).}
\label{tab:cases}
\end{table}

\paragraph{$1{:}1$ and $1{:}N$ head.}
When the student token ends at or inside the aligned teacher token, \Cref{prop:scatter} applies and the scatter of \Cref{eq:s-scatter} is the exact marginal.

\paragraph{$1{:}N$ interior.}
A student position inside a teacher token, with realized preceding bytes $\mathbf{r}_{\mathrm{pre}}$ already emitted since the teacher boundary, receives the conditional marginal: restrict the teacher distribution to the tokens whose bytes continue $\mathbf{r}_{\mathrm{pre}}$, renormalize by $\pi(\mathbf{r}_{\mathrm{pre}})$, and scatter the remaining bytes by the student's first token:
\begin{equation}
\begin{aligned}
t_i(u \mid \mathbf{r}_{\mathrm{pre}})
&= \frac{\sum_{w:\,\mathbf{r}_{\mathrm{pre}}\preceq\bytes(w),\;
\phi_{\mathbf{r}_{\mathrm{pre}}}(w)=u}q(w)}{\pi(\mathbf{r}_{\mathrm{pre}})},\\
t_i(\varnothing \mid \mathbf{r}_{\mathrm{pre}})
&= \frac{\sum_{w:\,\mathbf{r}_{\mathrm{pre}}\preceq\bytes(w),\;
\phi_{\mathbf{r}_{\mathrm{pre}}}(w)=\bot}q(w)}{\pi(\mathbf{r}_{\mathrm{pre}})}.
\end{aligned}
\label{eq:cond-interior}
\end{equation}
capped at the chunk boundary. This is exact: it is \Cref{eq:s-def} applied to the conditional teacher distribution, with every compatible outcome assigned to one longest student-token prefix or to $\varnothing$. It restates \Cref{eq:s-cond} for the $1{:}N$ interior case.

\paragraph{$N{:}1$ and $M{:}N$.}
Spanning student tokens receive the realized-path chain of \Cref{eq:s-chain}, applied as a mass-neutral reallocation from the head token's scatter cell to the spanning candidates, and each row thus remains a distribution over its explicit student-token cells and $\varnothing$. Positions that both start inside a teacher token and span past its end use the same chain conditioned on the observed $\mathbf{r}_{\mathrm{pre}}$; telescoping differences between nested candidates prevent prefix double-counting. Positions that neither start on a sync point nor reach one carry no exact target under the realized factorization and are excluded from the loss. On our training distribution these exclusions and all spanning corrections together account for well under one percent of response positions (measured per step: roughly $3{\times}10^4$ spanning rows against $6.6{\times}10^6$ aligned rows on the most dissimilar tokenizer pair), which is why the closed-form scatter dominates the cost. \Cref{tab:relstats} quantifies all four relations on each pair's realized generations.

\begin{table}[t]
\centering
\setlength{\tabcolsep}{6pt}
\begin{tabular}{@{}llrrr@{}}
\toprule
Relation & Example (teacher $\to$ student) & P1 & P2 & P3 \\
\midrule
$1{:}1$ & \texttt{the} $\to$ \texttt{the} & $99.65$ & $93.68$ & $94.75$ \\
$1{:}N$ & \texttt{10} $\to$ \texttt{1}\,$\cdot$\,\texttt{0} & $0.27$ & $6.24$ & $4.77$ \\
$N{:}1$ & \texttt{bit}\,$\cdot$\,\texttt{mask} $\to$ \texttt{bitmask} & $0.06$ & $0.06$ & $0.37$ \\
$M{:}N$ & \texttt{.f}\,$\cdot$\,\texttt{act}\,$\cdot$\,\texttt{orial} $\to$ \texttt{.factor}\,$\cdot$\,\texttt{ial} & $0.02$ & $0.02$ & $0.10$ \\
\bottomrule
\end{tabular}
\caption{Frequency of each segmentation relation among student positions in realized \method{} generations. P1/P2/P3 use $240$/$240$/$120$ responses and $5.4$/$4.1$/$3.1$M chunks, respectively; examples are the most frequent real instances in the measured corpora. P1 is almost entirely $1{:}1$, P2 is dominated by $1{:}N$ splits of GLM-Z1 tokens, especially digits, and P3 has the largest spanning share. Even on P3, the exact $1{:}1$ and $1{:}N$ cases cover $99.5\%$ of positions.}
\label{tab:relstats}
\end{table}

\paragraph{Worked example.}
Take teacher distribution $q = \{\texttt{at}{:}\,0.5,\ \texttt{ate}{:}\,0.2,\ \texttt{a}{:}\,0.2,\ \texttt{bit}{:}\,0.1\}$ at an aligned position, and a student vocabulary containing \texttt{a}, \texttt{at}, \texttt{ate}, \texttt{bit}. \Cref{tab:worked} computes the target twice: by trie node masses (\Cref{eq:residual,eq:trie-target}) and by the scatter (\Cref{eq:s-scatter}). Both give the same distribution and the mass telescopes to one, illustrating \Cref{lem:telescope} and \Cref{prop:scatter}.

\begin{table}[t]
\centering
\setlength{\tabcolsep}{6pt}
\begin{tabular}{@{}lcccc@{}}
\toprule
byte prefix $\mathbf{c}$ & \texttt{a} & \texttt{at} & \texttt{ate} & \texttt{bit} \\
\midrule
prefix mass $\pi(\mathbf{c})$ & 0.9 & 0.7 & 0.2 & 0.1 \\
node mass $\rho(\mathbf{c})$ & $0.9{-}0.7$ & $0.7{-}0.2$ & 0.2 & 0.1 \\
\midrule
$t(u)$ via trie & 0.2 & 0.5 & 0.2 & 0.1 \\
$t(u)$ via $\phi$-scatter & 0.2 & 0.5 & 0.2 & 0.1 \\
\bottomrule
\end{tabular}
\caption{Worked byte-prefix marginal. Trie node masses and the $\phi$-scatter agree for every student token, and $0.2{+}0.5{+}0.2{+}0.1=1$, confirming mass conservation. Each teacher token maps to a distinct student token here; \Cref{fig:overview}b shows the case in which several teacher tokens pool into one student token.}
\label{tab:worked}
\end{table}

For the spanning case, suppose the student emits \texttt{</think>} as one token where the teacher reads \texttt{</}\,$\cdot$\,\texttt{think}\,$\cdot$\,\texttt{>}, with realized-path probabilities $q_j(\texttt{</}) = 0.7$, $q_{j+1}(\texttt{think}) = 0.9$, and prefix mass $\pi_{j+2}(\texttt{>}) = 0.95$. \Cref{eq:s-chain} gives $t(\texttt{</think>}) = 0.7 \times 0.9 \times 0.95 \approx 0.60$, whereas the scatter of \Cref{eq:s-scatter} through $\phi(\texttt{</})$ would assign the spanning token essentially zero; the on-policy student would then be trained \emph{against} closing its own reasoning span. This failure is what the chain factorization of \Cref{sec:mismatch} exists to prevent.

\section{The Three Target Constructions on One Distribution}
\label{app:targets}

\Cref{fig:s-targets} holds one teacher distribution fixed so that only the target construction changes. Two teacher tokens, \texttt{then} and \texttt{they}, are absent from the student vocabulary. SimCT keeps only shared-surface tokens and renormalizes, leaving $44\%$ of the original teacher mass without supervision. ULD retains all rank slots but discards token identity by sorting both distributions. In contrast, \method{} routes \texttt{then} and \texttt{they} to \texttt{the}, their longest student-token byte prefix, while assigning genuinely unrepresentable mass to $\varnothing$. The comparison makes the three design requirements concrete: only the final construction preserves byte correspondence and total mass without restricting the student vocabulary.

\begin{figure*}[t]
\centering
\includegraphics[width=0.84\textwidth]{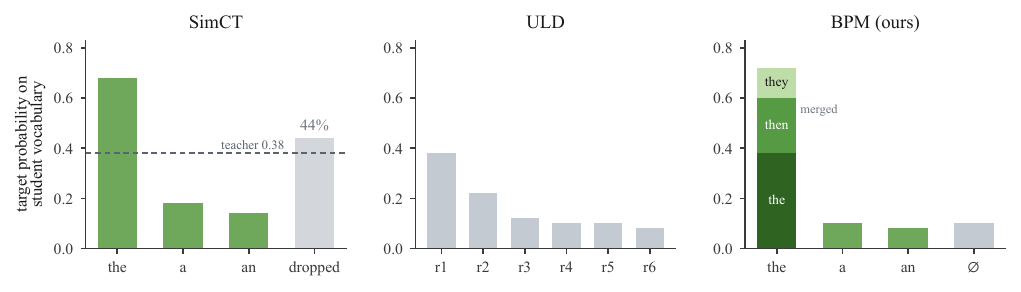}
\caption{Student-vocabulary targets constructed from the same teacher distribution over \texttt{the} ($.38$), \texttt{then} ($.22$), \texttt{they} ($.12$), \texttt{a} ($.10$), \texttt{an} ($.08$), and no-prefix mass ($.10$); \texttt{then} and \texttt{they} are absent from the student vocabulary, and colors track teacher-token mass. (a)~SimCT retains and renormalizes only shared-surface tokens, leaving $44\%$ of teacher mass unused. (b)~ULD aligns rank slots, so student-token identity is not preserved. (c)~\method{} routes the absent tokens to their longest student-token byte prefix \texttt{the}, routes unrepresentable mass to $\varnothing$, and preserves total mass.}
\label{fig:s-targets}
\end{figure*}

\section{Dispatch and Cost}
\label{app:impl}

The implementation separates tokenizer-dependent work from the repeated training path. A one-time precomputation builds the byte maps, trie, and routing table; during training, most positions reduce to one batched gather-scatter, while the rare interior and spanning cases receive targeted corrections. Content positions whose remaining bytes admit exactly one student continuation produce a point-mass target and train under $D_0$. The following paragraphs describe this dispatch, its fidelity checks and cost, and the evaluation conventions needed to reproduce the main results.

\paragraph{Implementation.}
We implement \method{} and every on-policy baseline arm on the slime training framework\footnote{\url{https://github.com/THUDM/slime}}. The teacher engine's sleep--wake scheduling follows KDFlow\footnote{\url{https://github.com/songmzhang/KDFlow}}. \Cref{tab:hparams} lists the configuration shared by the on-policy arms. SeqKD instead trains offline for three epochs at learning rate $1{\times}10^{-5}$ on teacher-generated responses, giving it no smaller training budget.

\begin{table}[t]
\centering
\small
\begin{tabular}{@{}ll@{}}
\toprule
\multicolumn{2}{@{}l}{\emph{Training}} \\
Student & Qwen3.5-2B (thinking mode) \\
Epochs & $2$ \\
Batch size & $256$ \\
Learning rate & $5\times10^{-7}$, constant \\
Default divergence & $D_\beta$ with $\beta=\tfrac12$ \\
Whitespace-row mask & on \\
\midrule
\multicolumn{2}{@{}l}{\emph{On-policy rollouts}} \\
Temperature & $1.0$ \\
top-$p$ / top-$k$ & $0.95$ / $20$ \\
Length cap & $27{,}648$ tokens \\
\midrule
\multicolumn{2}{@{}l}{\emph{Evaluation decoding}} \\
Temperature & $0.6$ \\
Responses per problem & $8$ (scored as avg@8 / pass@8) \\
\midrule
Hardware per run & one $8\times$H200 SXM node \\
\bottomrule
\end{tabular}
\caption{Training, rollout, and evaluation configuration shared by all on-policy arms. SeqKD is the exception: it fine-tunes offline for three epochs at learning rate $1{\times}10^{-5}$ on its teacher-generated corpus.}
\label{tab:hparams}
\end{table}

\paragraph{Hardware.}
Each training run occupies one $8\times$H200 SXM ($141$\,GB) node with the rollout engine, the teacher, and the trainer colocated. A few arms add data-parallel nodes of the same type for wall-clock efficiency, with the recipe otherwise unchanged.

\paragraph{Rollout decoding and stop sets.}
On-policy rollouts sample under the student's release decode settings (\Cref{tab:hparams}). The stop sets $\mathcal{S}^\rmt, \mathcal{S}^\rms$ are read from each model's declared stop tokens: the tokenizer's and the generation configuration's end-of-sequence entries and the chat template's turn-end token, since the turn-end token of several chat models differs from the tokenizer's \texttt{eos} field. Stop tokens are stripped from the byte stream before alignment. Keeping their literal bytes would work against the stopping target: the teacher assigns those characters near-zero probability, and the divergence would then pull the student away from stopping and toward the length limit.

\paragraph{Precomputation.}
Per tokenizer pair, once: byte maps for both vocabularies, the student byte trie, the map $\phi$ (one trie descent per teacher token, $O(\sum_{w}\lvert\bytes(w)\rvert)$ total), and the stop sets $\mathcal{S}^\rmt, \mathcal{S}^\rms$ from the generation configurations. The content part of $\phi$ is static; the stop extension $\phi(w)=s^*$ (\Cref{sec:objective}) is the one response-dependent entry, patched per response in $O(\lvert\mathcal{S}^\rmt\rvert)$.

\paragraph{Per step.}
Each response position is classified by its byte offsets into the relations of \Cref{tab:cases} and routed as follows: aligned and $1{:}N$-head positions to one batched gather-scatter of the teacher distributions through $\phi$ (\Cref{eq:s-scatter}); $1{:}N$-interior positions to the batched conditional marginal (\Cref{eq:cond-interior}); spanning positions to the chain correction (\Cref{eq:s-chain,eq:s-cond}), applied as a sparse update on top of the scatter; stop positions to the bridge target (\Cref{eq:s-stop}). The scatter dominates: it touches each aligned row once, with cost linear in the teacher vocabulary per row, and carries no byte-level traversal; only the rare interior-conditional and spanning rows perform bounded byte-trie descents, each linear in the token length. Classification itself is linear in the response length.

\paragraph{Fidelity.}
The vectorized scatter is checked against the pure byte-walk reference (\Cref{eq:residual,eq:trie-target}) position by position: agreement is at floating-point tolerance on $1{:}1$ and $1{:}N$ positions, and the spanning correction reproduces the chain value on realized tokens. The reference test suite also encodes the failure mode of \Cref{app:cases} as an assertion: a target built by first-token scatter alone must fail the spanning test, which guards against regressions that would silently reintroduce it.

\paragraph{Tokenizer-similarity metrics.}
The overlap coefficient of the Design paragraph of \Cref{sec:setup} is $\lvert \Vt \cap \Vs \rvert / \min(\lvert \Vt \rvert, \lvert \Vs \rvert)$, computed byte-exactly on token strings. Span Jaccard is the Jaccard similarity of the two tokenizers' sets of byte-offset segmentation spans on the training corpus. \Cref{fig:toksim} additionally reports vocabulary Jaccard, boundary $F_1$, and frequency-weighted coverage on the same $2{,}000$-prompt sample.

\begin{figure}[t]
\centering
\includegraphics[width=0.86\textwidth]{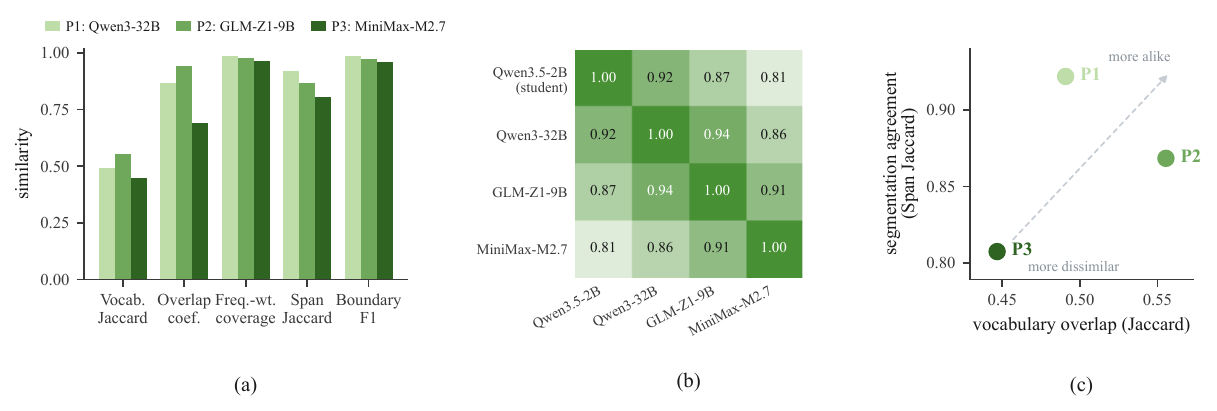}
\caption{Tokenizer similarity across the three teacher$\rightarrow$student pairs. (a)~Vocabulary Jaccard, overlap coefficient, frequency-weighted coverage, span Jaccard, and boundary $F_1$ on $2{,}000$ training prompts. (b)~Pairwise exact-span Jaccard for all four tokenizers on the same documents. (c)~P1--P3 in the vocabulary-versus-segmentation similarity plane.}
\label{fig:toksim}
\end{figure}

\paragraph{Decoding and metrics.}
Evaluation decoding uses temperature $0.6$ with $8$ samples per problem. \textbf{avg@8} is the mean correctness over the $8$ samples~\citep{chen2021codex}; \textbf{pass@8} is the fraction of problems solved by at least one sample. The February 2026 AIME and HMMT contests postdate every teacher's training cutoff. Headline margins carry $95\%$ paired-bootstrap intervals over problems ($10{,}000$ resamples), reflecting problem and sample variability in one regenerated evaluation per arm, but not training-run or checkpoint-selection uncertainty (\Cref{app:ci}).

\paragraph{Benchmark exclusions.}
For HumanEval+ we exclude one problem, HumanEval/32 (\texttt{find\_zero}). The exported official test special-cases this problem's oracle and unpacks the returned scalar root as an argument tuple. Every submission then fails with a \texttt{TypeError}, including the canonical solution, and the original oracle cannot be reconstructed from the export. We drop the problem and report HumanEval+ on the remaining $163$ problems, which run the unmodified official tests.

The full HumanEval+ set is $164$ problems, one excluded above. The LiveCodeBench slice is the most recent official window, $2025.01$--$2025.04$ ($182$ problems), each run against its full official test suite under the sampling protocol above. The TACO-test count ($283$ problems) and its construction are given in the extended-results section below.

\section{Task Prompts}
\label{app:prompts}

Training and every evaluation share one instruction template per task family, filled with the problem instance and then wrapped by the student model's own chat template with thinking enabled; responses are decoded up to $27{,}648$ tokens. Mathematics problems end with an \texttt{Answer:} line and code problems with a single fenced program, at training and test time alike. Braces denote fields substituted per instance.

\paragraph{Mathematical reasoning.}
The verifier reads the last \verb|\boxed{}| if present, else the string on the final \texttt{Answer:} line, and compares it to the reference with a symbolic checker. Integer answers (AIME) and the fractions, radicals, and closed forms that HMMT answers take are both handled: any expression equal to the reference counts, regardless of surface form.
\begin{promptbox}{Mathematical Reasoning Prompt (train = test)}
Solve the following math problem step by step. The last line of your response should be of the form Answer: $Answer (without quotes) where $Answer is the answer to the problem.

{problem}

Remember to put your answer on its own line after "Answer:".
\end{promptbox}

\paragraph{Code generation.}
The grader extracts the last syntactically valid fenced program from the response and executes it in a sandbox against the benchmark's official tests (stdin/stdout comparison or unit-test suites, per benchmark), at the official resource limits.
\begin{promptbox}{Code Generation Prompt (train = test)}
You are given a competitive programming problem. Think step by step, then provide a single complete Python program in a ```python code block. If a starter code / function signature is given, implement it; otherwise read from standard input and write to standard output.

{problem}
\end{promptbox}

\section{Extended Results}
\label{app:extra-results}

The following analyses unpack the aggregate results from four angles: benchmark-level divergence effects, method-agnostic distributional closeness, training stability and response length, and uncertainty in the headline margins.

\subsection{Per-Benchmark Scores for the Divergence Comparison}
\label{app:perds}

Across the three pairs, divergence choice moves six-benchmark avg@8 by three to five points (\Cref{tab:div}), a swing comparable to the main text's margins over the baselines. \Cref{tab:s-perds} traces each aggregate back to its six benchmark scores at the same selected checkpoints. Two patterns explain most of the movement. First, MATH-500 compresses stable arms into a $6.0$-point band ($78.8$--$84.8$ avg@8), whereas AIME 2026 spreads by as much as $12.5$ points within one block; contest mathematics and code therefore determine most of the divergence ranking. Second, FKL gains primarily on code. For P1, the three divergences lie within $1.6$ points in mathematics average ($46.1$--$47.7$) and within $1.3$ on AIME, but FKL leads the code average by $3.7$--$4.8$ points, with the widest gap on TACO ($13.3$ against $8.6$ and $10.5$). On P2, its lead spans both halves and is anchored by AIME ($35.4$ against $26.7$ and $28.3$).

P3 admits no single divergence ranking because the arms trade benchmarks. JSD posts the block's best mathematics average ($43.7$; AIME $32.5$ against FKL's $25.4$) and its worst HumanEval+ average ($40.6$ against $67.1$), yet HumanEval+ pass@8 remains nearly intact ($87.7$ against $92.0$). The arm still solves most problems in at least one sample but does so less consistently, indicating a consistency loss rather than a capability loss; \Cref{tab:s-ci} quantifies the same trade with paired intervals. The flagged RKL row shows a different failure: collapse appears unevenly across benchmarks. At its selected pre-collapse peak, RKL still posts the block's best AIME score ($37.9$), while MATH-500 has already fallen to $59.7$ against $81.7$--$81.8$ for its siblings. The larger $500$-problem set exposes degradation not yet sampled by the $30$-problem contest, and the code half trails throughout ($7.6$ on TACO).

The main text notes the \method{} margins are widest on the contests: the best arm beats each pair's strongest AIME 2026 baseline by $+1.7$ (P1, SimCT at $35.8$), $+12.1$ (P2, SeqKD at $23.3$), and $+7.1$ (P3, SeqKD at $18.3$). The $+12.1$ is wide partly because P2's strongest AIME baseline is anomalously weak on that benchmark; the pair itself is not harder, since AIME 2026 is the same test across all three pairs.

\begin{table*}[t]
\centering
\small
\setlength{\tabcolsep}{1.8pt}
\resizebox{\textwidth}{!}{%
\begin{tabular}{@{}l *{12}{c} cc@{}}
\toprule
 & \multicolumn{2}{c}{AIME 2026} & \multicolumn{2}{c}{HMMT 2026} & \multicolumn{2}{c}{MATH-500} & \multicolumn{2}{c}{HumanEval+} & \multicolumn{2}{c}{LiveCodeBench} & \multicolumn{2}{c}{TACO} & \multicolumn{2}{c}{All (6)} \\
\cmidrule(lr){2-3} \cmidrule(lr){4-5} \cmidrule(lr){6-7} \cmidrule(lr){8-9} \cmidrule(lr){10-11} \cmidrule(lr){12-13} \cmidrule(lr){14-15}
 & avg@8 & pass@8 & avg@8 & pass@8 & avg@8 & pass@8 & avg@8 & pass@8 & avg@8 & pass@8 & avg@8 & pass@8 & avg@8 & pass@8 \\
\midrule
\multicolumn{15}{@{}l}{\emph{P1 (Qwen3-32B teacher)}} \\
\quad FKL & 37.5 & 73.3 & 23.5 & 39.4 & 82.1 & 96.4 & 61.8 & 88.3 & 15.7 & 33.0 & 13.3 & 35.0 & 39.0 & 60.9 \\
\quad JSD & 37.1 & 66.7 & 19.3 & 36.4 & 82.0 & 94.6 & 57.4 & 89.6 & 10.6 & 29.1 & 8.6 & 30.4 & 35.8 & 57.8 \\
\quad RKL & 36.2 & 66.7 & 22.0 & 39.4 & 82.9 & 95.0 & 57.1 & 85.3 & 12.3 & 33.5 & 10.5 & 30.0 & 36.8 & 58.3 \\
\midrule
\multicolumn{15}{@{}l}{\emph{P2 (GLM-Z1-9B teacher)}} \\
\quad FKL & 35.4 & 63.3 & 21.2 & 36.4 & 84.8 & 95.4 & 68.8 & 91.4 & 22.3 & 33.5 & 16.5 & 36.4 & 41.5 & 59.4 \\
\quad JSD & 26.7 & 56.7 & 17.8 & 30.3 & 84.7 & 95.2 & 69.6 & 90.2 & 19.4 & 32.4 & 13.6 & 29.7 & 38.6 & 55.8 \\
\quad RKL & 28.3 & 50.0 & 20.5 & 33.3 & 78.8 & 94.8 & 65.3 & 89.0 & 16.1 & 30.2 & 11.8 & 30.4 & 36.8 & 54.6 \\
\midrule
\multicolumn{15}{@{}l}{\emph{P3 (MiniMax-M2.7 teacher)}} \\
\quad FKL & 25.4 & 46.7 & 18.6 & 39.4 & 81.8 & 95.6 & 67.1 & 92.0 & 21.8 & 34.1 & 17.4 & 34.6 & 38.7 & 57.1 \\
\quad JSD & 32.5 & 56.7 & 17.0 & 27.3 & 81.7 & 93.4 & 40.6 & 87.7 & 18.1 & 30.2 & 12.6 & 28.6 & 33.8 & 54.0 \\
\quad RKL & 37.9 & 56.7 & 17.8 & 36.4 & 59.7 & 86.2 & 32.5 & 75.5 & 15.6 & 26.9 & 7.6 & 22.6 & 28.5 & 50.7 \\
\bottomrule
\end{tabular}}%
\caption{Per-benchmark avg@8 and pass@8 (\%) for the three \method{} divergences on each pair. Each row uses the same selected checkpoint as \Cref{tab:div}, and All reproduces its six-benchmark aggregate. Per-benchmark results for the undistilled student and baselines appear in \Cref{tab:main}.}
\label{tab:s-perds}
\end{table*}

\subsection{Method-Agnostic Distributional Closeness}
\label{app:closeness}
We evaluate distributional closeness from four complementary directions, each computed from the two models alone and averaged over student rollouts on held-out mathematics. \emph{Teacher bits-per-byte} is the teacher's byte-normalized negative log-likelihood of student-generated text. \emph{Shared-mass ratio} measures teacher probability on shared-surface tokens at student-visited states, a concentration that prior on-policy-distillation work associates with success (\Cref{sec:setup}). \emph{Byte top-1 agreement} records how often the models' most likely next tokens begin with the same byte at shared boundaries. The reverse direction, \emph{student bits-per-byte on teacher text}, evaluates the student's negative log-likelihood of teacher-written solutions and is not directly optimized by any on-policy arm.

The direction of evaluation matters. Teacher bits-per-byte, shared mass, and byte agreement all reward a student the teacher finds predictable, so low-entropy repetition can improve them without faithful transfer. Student bits-per-byte on teacher text cannot obtain the same advantage from collapse and therefore provides the complementary fidelity check. On the two pairs whose baselines train to comparable depth, this reverse-direction measure ranks \method{} first among the distilled arms (\Cref{tab:s-universal}). Because a single-pair lead can fall within decoding noise, the claim rests on the repeated ordering rather than any one margin.

P3 requires a different interpretation. Its baselines are selected at step $24$, one sixth of the schedule and immediately before collapse; every reverse-direction value lies within $0.014$ bits of the base, and baseline gains are at most $0.003$. The column therefore measures distance from initialization rather than transfer on this pair. Teacher-facing metrics reverse their ordering: \method{} arms trained to step $99$ without collapse achieve the highest shared-mass and byte-agreement values, whereas the most degenerate baselines lead those columns on P1 and P2. The two directions exchange leaders precisely where the baselines remain closest to initialization, consistent with what the four metrics measure.

\paragraph{Baseline collapse timeline.}
The collapse timeline exposes the weakness of the three teacher-facing quantities. SimCT enters runaway repetition by iteration $24$ on MiniMax-M2.7 and after iteration $74$ on GLM-Z1-9B, while remaining stable on the student's own model family. As repetition lowers output entropy, the student becomes easier for the teacher to predict and can appear closer under teacher bits-per-byte or shared mass. Reverse-direction student bits-per-byte is what separates this predictable collapse from faithful transfer.

\begin{table}[t]
\centering
\setlength{\tabcolsep}{6pt}
\begin{tabular}{@{}lcccc@{}}
\toprule
Method & Tea.\ BpB\,$\downarrow$ & Shared\,$\uparrow$ & Byte agr.\,$\uparrow$ & Stu.\ BpB\,$\downarrow$ \\
\midrule
\multicolumn{5}{@{}l}{\emph{P1 (Qwen3-32B teacher)}} \\
Qwen3.5-2B (base) & $0.328$ & $0.955$ & $0.945$ & $0.251$ \\
\quad SimCT & $0.205$ & $0.995$ & $0.963$ & $0.267$ \\
\quad ULD & $0.162$ & $0.968$ & $0.975$ & $0.272$ \\
\quad GOLD & $0.172$ & $0.985$ & $0.972$ & $0.264$ \\
\quad \method{} (FKL) & $0.259$ & $0.993$ & $0.965$ & $0.263$ \\
\quad \method{} (JSD) & $0.229$ & $0.993$ & $0.965$ & $0.254$ \\
\quad \method{} (RKL) & $0.217$ & $0.992$ & $0.964$ & $0.254$ \\
\midrule
\multicolumn{5}{@{}l}{\emph{P2 (GLM-Z1-9B teacher)}} \\
Qwen3.5-2B (base) & $0.449$ & $0.930$ & $0.933$ & $0.242$ \\
\quad SimCT & $0.155$ & $0.952$ & $0.977$ & $0.248$ \\
\quad ULD & $0.231$ & $0.933$ & $0.963$ & $0.241$ \\
\quad GOLD & $0.226$ & $0.954$ & $0.969$ & $0.238$ \\
\quad \method{} (ours) & $0.250$ & $0.943$ & $0.964$ & $0.237$ \\
\midrule
\multicolumn{5}{@{}l}{\emph{P3 (MiniMax-M2.7 teacher)}} \\
Qwen3.5-2B (base) & $0.269$ & $0.893$ & $0.957$ & $0.282$ \\
\quad SimCT & $0.183$ & $0.916$ & $0.970$ & $0.279$ \\
\quad ULD & $0.201$ & $0.896$ & $0.966$ & $0.281$ \\
\quad GOLD & $0.195$ & $0.924$ & $0.972$ & $0.279$ \\
\quad \method{} (FKL) & $0.329$ & $0.928$ & $0.957$ & $0.295$ \\
\quad \method{} (JSD) & $0.295$ & $0.934$ & $0.963$ & $0.288$ \\
\quad \method{} (RKL) & $0.205$ & $0.934$ & $0.973$ & $0.282$ \\
\bottomrule
\end{tabular}
\caption{Distributional closeness across all three pairs, averaged over student rollouts on held-out mathematics at the checkpoint selected by six-benchmark avg@8. Unlabeled \method{} uses the default JSD ($\beta{=}\tfrac12$). Arrows indicate the nominal closer-to-teacher direction; student BpB evaluates teacher-written text and is not directly optimized by any on-policy arm. P3 baselines are selected at step $24$, before their collapse.}
\label{tab:s-universal}
\end{table}

\paragraph{P2 SimCT checkpoint recovery.}
The selected P2 SimCT checkpoint predates a checkpoint-retention incident and was recovered by replaying the identical recipe from initialization; the replayed and surviving checkpoints then entered the same checkpoint-selection rule as every other arm.

\paragraph{TACO-test benchmark construction.}
{\sloppy The competitive-programming test set in the main results is built from the official TACO test split (BAAI/TACO, dataset revision \texttt{d593ed0a}), restricted to the official \textsc{easy} and \textsc{medium} difficulty labels ($400$ of $1{,}000$ problems), with three further exclusions, each from official metadata or a disclosed screen: $75$ special-judge problems listed in the TACO repository's \texttt{output\_spj.jsonl} (our harness compares outputs exactly, so special-judge problems would grade as false negatives), $21$ problems with image assets or empty test suites, and $21$ problems whose statements near-duplicate our training pool under the same $n$-gram screens used for mix assembly (raw $13$-gram containment $\geq 0.7$, raw $8$-gram $\geq 0.6$, or NFKC $6$-gram $\geq 0.60$; all removals are Codeforces near-duplicates and are listed in the build manifest). The final set holds $283$ problems ($149$ \textsc{easy}, $134$ \textsc{medium}) across five judges, each with up to $40$ executable test cases. Because the training mix never touches the TACO test split (the pool is built from the train shards only), this set measures in-distribution transfer to unseen problems.\par}

\paragraph{Training-mix decontamination.}
\label{app:decontam}
The $20{,}000$-problem training mix is screened against all six evaluation benchmarks in two stages. The first stage drops verbatim leaks at pool construction, including problems that reappear in the February 2026 competitions, under the same $n$-gram containment thresholds used for the TACO-test screen above. The second stage re-screens at mix assembly, where no further removals were needed because the pool was already clean, and a final audit of all $20{,}000$ assembled prompts finds no residual leak.

\paragraph{Collapse-probe sample sizes.}
The whitespace-collapse measurements in \Cref{sec:collapse} report the following probe sizes, all on the controlled no-mask arm of the P2 pair: the indentation-target probe covers $957$ indentation rows and the off-realized-mass decomposition $477$; the margin probe spans $80$ programs with $1{,}799$ indentation positions ($22.5$ per program on average); the graded pass- and error-rate diagnostics use $128$ generations per checkpoint at the training-rollout temperature $1.0$.

\paragraph{Data-mixture ablation: construction and per-benchmark detail.}
The harder-math variant of \Cref{tab:datamix} replaces the mathematics half with harder competition-style problems and refreshes $1{,}250$ of the $10$k code problems. All three arms select step $149$ under the same rule, and \Cref{tab:s-datamixfull} expands their six-benchmark aggregates. FKL's $-2.1$ change is concentrated on the contests (AIME $35.4 \to 28.3$, HMMT $21.2 \to 18.2$), while its code average barely moves ($35.9 \to 35.6$), indicating that the easier original mathematics half aided contest transfer. JSD and RKL remain within $\pm 2.5$ per benchmark with no consistent direction; JSD even improves on AIME ($26.7 \to 29.2$). Response lengths are nearly unchanged between the mixes, with medians reported in \Cref{app:instability}.

The expanded-corpus variant grows the training set from $20$k to $35{,}527$ prompts ($1.78\times$): the mathematics half doubles ($+3$k from the same competition source, $+7$k from DeepMath-103K~\citep{he2025deepmath}) and the code half grows by $5.5$k (TACO, KodCode~\citep{xu2025kodcode}, and curated candidates), with zero contamination against all six evaluation benchmarks. Training runs the same protocol for $277$ steps. Scores plateau from step $99$ onward with no late collapse: checkpoints were evaluated every $25$ steps through the terminal save at step $274$, the six-benchmark average fluctuates within $\pm 1.5$ points of each arm's peak, and median AIME response lengths stay at $15$--$18$k tokens throughout, in contrast to the collapsing $20$k baselines of \Cref{tab:s-len}.

\begin{table*}[t]
\centering
\small
\setlength{\tabcolsep}{1.8pt}
\resizebox{\textwidth}{!}{%
\begin{tabular}{@{}l *{12}{c} cc@{}}
\toprule
 & \multicolumn{2}{c}{AIME 2026} & \multicolumn{2}{c}{HMMT 2026} & \multicolumn{2}{c}{MATH-500} & \multicolumn{2}{c}{HumanEval+} & \multicolumn{2}{c}{LiveCodeBench} & \multicolumn{2}{c}{TACO} & \multicolumn{2}{c}{All (6)} \\
\cmidrule(lr){2-3} \cmidrule(lr){4-5} \cmidrule(lr){6-7} \cmidrule(lr){8-9} \cmidrule(lr){10-11} \cmidrule(lr){12-13} \cmidrule(lr){14-15}
 & avg@8 & pass@8 & avg@8 & pass@8 & avg@8 & pass@8 & avg@8 & pass@8 & avg@8 & pass@8 & avg@8 & pass@8 & avg@8 & pass@8 \\
\midrule
\multicolumn{15}{@{}l}{\emph{Original mix (selected checkpoints, as in the divergence table)}} \\
\quad FKL & 35.4 & 63.3 & 21.2 & 36.4 & 84.8 & 95.4 & 68.8 & 91.4 & 22.3 & 33.5 & 16.5 & 36.4 & 41.5 & 59.4 \\
\quad JSD & 26.7 & 56.7 & 17.8 & 30.3 & 84.7 & 95.2 & 69.6 & 90.2 & 19.4 & 32.4 & 13.6 & 29.7 & 38.6 & 55.8 \\
\quad RKL & 28.3 & 50.0 & 20.5 & 33.3 & 78.8 & 94.8 & 65.3 & 89.0 & 16.1 & 30.2 & 11.8 & 30.4 & 36.8 & 54.6 \\
\midrule
\multicolumn{15}{@{}l}{\emph{Harder-math mix (all three select step $149$)}} \\
\quad FKL & 28.3 & 63.3 & 18.2 & 27.3 & 83.3 & 95.8 & 67.3 & 88.3 & 22.3 & 34.1 & 17.1 & 33.9 & 39.4 & 57.1 \\
\quad JSD & 29.2 & 60.0 & 19.3 & 33.3 & 82.9 & 94.0 & 66.9 & 87.7 & 20.6 & 34.6 & 14.7 & 32.5 & 38.9 & 57.0 \\
\quad RKL & 27.1 & 60.0 & 18.9 & 27.3 & 82.0 & 94.2 & 66.9 & 88.3 & 18.1 & 31.9 & 11.4 & 28.6 & 37.4 & 55.1 \\
\midrule
\multicolumn{15}{@{}l}{\emph{Expanded corpus (FKL and RKL select step $149$, JSD step $224$)}} \\
\quad FKL & 33.3 & 60.0 & 21.6 & 33.3 & 84.4 & 95.8 & 67.6 & 92.0 & 21.2 & 35.2 & 15.9 & 34.3 & 40.7 & 58.4 \\
\quad JSD & 30.8 & 63.3 & 19.7 & 33.3 & 81.1 & 94.0 & 68.9 & 90.8 & 19.4 & 34.6 & 13.6 & 33.6 & 38.9 & 58.3 \\
\quad RKL & 31.7 & 50.0 & 21.2 & 36.4 & 79.7 & 94.4 & 65.2 & 87.1 & 17.3 & 28.6 & 10.7 & 27.2 & 37.6 & 53.9 \\
\bottomrule
\end{tabular}}%
\caption{Per-benchmark expansion of the GLM-Z1-9B data ablation in \Cref{tab:datamix}: avg@8 and pass@8 (\%) under the original mix, the harder-math mix, and the expanded corpus. Each block header gives the selected checkpoint, and All averages the six benchmarks.}
\label{tab:s-datamixfull}
\end{table*}

\subsection{Divergence Instability and Length Behavior}
\label{app:instability}
Response length helps distinguish learned stopping from degeneration. We use it here to examine three observations from the main results: contest length-cap behavior after stop transfer, the late collapse of the pure mode-seeking arm, and the effect of the data-mixture ablation on generation length.

\paragraph{Contest length-cap behavior.}
\Cref{tab:s-len} reports, for every arm at its selected checkpoint, the median and mean response length and the fraction of samples that ran to the $27{,}648$-token cap on the two contests. The main text's headline reads off the first block: the undistilled base hits the AIME cap on $97\%$ of samples, and \method{}-FKL cuts that to $34\%$, the lowest of any distilled P1 arm (SimCT $50\%$, ULD $49\%$, GOLD $62\%$, SeqKD $61\%$). On GLM-Z1-9B the effect is strongest: \method{}-FKL stops early enough that only $8\%$ of AIME samples hit the cap. Two limits temper this. P1 ULD posts the shortest medians of its block and one of its lowest scores. Its brevity is degeneration rather than learned stopping. And on MiniMax-M2.7 the benefit does not arrive: every P3 arm, \method{} included, still caps at the median (cap fractions $62$--$90\%$ for the \method{} arms), and the teacher itself runs to the cap on $55\%$ of its HMMT samples. On this pair a verbose teacher's end-of-turn decision is a weaker lever. Where the median pins to the cap but the mean sits lower, more than half the samples cap while a minority stop early; we read the median as the typical response.

\paragraph{Late RKL collapse.}
On the MiniMax-M2.7 pair the RKL ($\beta{=}1$) arm peaks early and then collapses. After posting the block's best AIME 2026 score at step $99$, it degrades over the next fifty steps, and by step $155$ over $99\%$ of its contest responses run to the $27{,}648$-token cap, with repetition filling $56\%$ of their tokens. The FKL and $\beta{=}\tfrac12$ arms on the same pair stay stable past their selected checkpoints. The instability therefore needs pure mode-seeking and the most dissimilar pair together. Its selected step-$99$ row in \Cref{tab:s-len} already shows the highest \method{} cap fractions of the block, the collapse announcing itself in length before it reaches accuracy.

\paragraph{Data-mixture lengths.}
Response lengths under the two training mixes are close. Harder-math contest medians sit at $15$--$22$k tokens against the original mix's $15.6$--$23.7$k, and MATH-500 sits near $1.8$k under both. Neither mix moves the two code benchmarks, whose medians sit at the cap under both. Because the selected original-mix checkpoints are already below the cap on the contests, the mixture is not a length lever at these checkpoints. The truncation behavior in this section comes from the objective.

\begin{table}[t]
\centering
\small
\setlength{\tabcolsep}{3.4pt}
\begin{tabular}{@{}l rrr rrr@{}}
\toprule
 & \multicolumn{3}{c}{AIME 2026} & \multicolumn{3}{c}{HMMT 2026} \\
\cmidrule(lr){2-4} \cmidrule(lr){5-7}
Arm (step) & median & mean & cap\% & median & mean & cap\% \\
\midrule
Qwen3.5-2B (base) & $27648$ & $27551$ & $97$ & $27648$ & $27359$ & $95$ \\
\midrule
\multicolumn{7}{@{}l}{\emph{P1 (Qwen3-32B teacher; teacher: median $11588$/$18116$, cap $10$/$19\%$)}} \\
\quad SimCT (155) & $27648$ & $19762$ & $50$ & $27648$ & $22947$ & $66$ \\
\quad ULD (155) & $11684$ & $15554$ & $49$ & $4664$ & $11875$ & $34$ \\
\quad GOLD (99) & $27648$ & $19341$ & $62$ & $27648$ & $21086$ & $71$ \\
\quad SeqKD (191) & $27648$ & $20960$ & $61$ & $27648$ & $22548$ & $64$ \\
\quad \method{} (FKL, 155) & $21398$ & $19406$ & $34$ & $27640$ & $22011$ & $50$ \\
\quad \method{} (JSD, 155) & $23922$ & $19898$ & $44$ & $27648$ & $22452$ & $55$ \\
\quad \method{} (RKL, 155) & $21238$ & $19103$ & $38$ & $27648$ & $22116$ & $54$ \\
\midrule
\multicolumn{7}{@{}l}{\emph{P2 (GLM-Z1-9B teacher; teacher: median $6075$/$10712$, cap $0$/$3\%$)}} \\
\quad SimCT (49) & $27648$ & $22027$ & $71$ & $27648$ & $23942$ & $77$ \\
\quad ULD (49) & $15161$ & $15575$ & $22$ & $18007$ & $17716$ & $25$ \\
\quad GOLD (49) & $19144$ & $18132$ & $35$ & $26027$ & $21598$ & $48$ \\
\quad SeqKD (182) & $18122$ & $17849$ & $45$ & $27648$ & $20594$ & $54$ \\
\quad \method{} (FKL, 149) & $15584$ & $14687$ & $8$ & $18595$ & $17713$ & $14$ \\
\quad \method{} (JSD, 155) & $17568$ & $17206$ & $27$ & $21364$ & $19501$ & $36$ \\
\quad \method{} (RKL, 149) & $19428$ & $18421$ & $42$ & $23671$ & $20954$ & $46$ \\
\midrule
\multicolumn{7}{@{}l}{\emph{P3 (MiniMax-M2.7 teacher; teacher: median $21476$/$27648$, cap $41$/$55\%$)}} \\
\quad SimCT (24) & $27648$ & $27648$ & $100$ & $27648$ & $27448$ & $98$ \\
\quad ULD (24) & $27648$ & $25072$ & $82$ & $27648$ & $26005$ & $91$ \\
\quad GOLD (24) & $27648$ & $26708$ & $95$ & $27648$ & $26993$ & $96$ \\
\quad SeqKD (111) & $27648$ & $27170$ & $98$ & $27648$ & $26599$ & $95$ \\
\quad \method{} (FKL, 99) & $27648$ & $21632$ & $62$ & $27648$ & $23535$ & $75$ \\
\quad \method{} (JSD, 99) & $27648$ & $21499$ & $64$ & $27648$ & $23792$ & $78$ \\
\quad \method{} (RKL, 99) & $27648$ & $24710$ & $82$ & $27648$ & $25609$ & $90$ \\
\bottomrule
\end{tabular}
\caption{Contest response lengths at selected checkpoints: median and mean tokens, and cap\%, the share of eight samples per problem that reach the $27{,}648$-token limit. Pair headers give teacher statistics.}
\label{tab:s-len}
\end{table}

\subsection{Bootstrap Confidence Intervals for the Headline Margins}
\label{app:ci}

We attach uncertainty to each within-pair margin with a paired bootstrap over problems. For every pair, the selected checkpoints of the best \method{} arm and its comparator baseline are re-generated under the same protocol: $8$ samples per problem, temperature $0.6$, and identical graders. Each problem then contributes an avg@8 for both arms from one harness run. We resample problems with replacement $10{,}000$ times, recompute the margin, and report the $2.5$th and $97.5$th percentiles. For the Avg column, each resample draws problem indices separately within all six benchmarks and aggregates them jointly.

These problem-level intervals reflect variability in the fixed set of eight regenerated samples; they do not estimate variation across training runs or checkpoint selection, since each arm is trained once and its checkpoint is already chosen. Re-generation redraws all samples, so its point estimates differ from the original runs in \Cref{tab:main} by sampling noise. The largest observed per-benchmark deviation is $6.3$ points on the $30$-problem AIME 2026, which remains within one standard error. The same limitation applies to the closeness intervals reported above.

\Cref{tab:s-ci} presents four comparisons: the best arm against SimCT on P1 and against ULD on P2 and P3, plus JSD against ULD on P3 as an alternate-divergence check for the most dissimilar pair. On P1 and P3 the comparator is the pair's strongest baseline; on P2 it is the strongest distributional baseline, as SeqKD ranks higher there in \Cref{tab:main}.

\begin{table}[t]
\centering
\small
\setlength{\tabcolsep}{3pt}
\begin{tabular}{@{}lrrrr@{}}
\toprule
& \multicolumn{1}{c}{P1: FKL$-$SimCT} & \multicolumn{1}{c}{P2: FKL$-$ULD} & \multicolumn{1}{c}{P3: FKL$-$ULD} & \multicolumn{1}{c}{P3: JSD$-$ULD} \\
\midrule
AIME 2026   & $+0.8$ $[-5.0, +7.1]$ & $\mathbf{+11.3}$ $[+5.0, +17.9]$ & $\mathbf{+13.8}$ $[+6.3, +21.3]$ & $\mathbf{+10.4}$ $[+1.7, +19.6]$ \\
HMMT 2026   & $+2.7$ $[-3.0, +8.3]$ & $\mathbf{+4.9}$ $[+0.8, +9.9]$ & $\mathbf{+7.6}$ $[+1.9, +14.8]$ & $\mathbf{+6.4}$ $[+1.9, +11.7]$ \\
MATH-500     & $-0.1$ $[-1.6, +1.4]$ & $\mathbf{+2.8}$ $[+1.1, +4.5]$ & $\mathbf{+3.0}$ $[+0.9, +5.0]$ & $\mathbf{+4.3}$ $[+2.3, +6.3]$ \\
HumanEval+  & $\mathbf{+4.5}$ $[+0.8, +8.3]$ & $+2.4$ $[-1.9, +6.5]$ & $-0.7$ $[-5.1, +3.8]$ & $\mathbf{-23.9}$ $[-28.8, -18.9]$ \\
LCB         & $\mathbf{+12.2}$ $[+9.2, +15.4]$ & $\mathbf{+6.8}$ $[+4.2, +9.7]$ & $\mathbf{+3.7}$ $[+1.8, +5.7]$ & $+1.3$ $[-0.5, +3.2]$ \\
TACO        & $\mathbf{+10.1}$ $[+7.9, +12.3]$ & $\mathbf{+3.1}$ $[+0.9, +5.4]$ & $\mathbf{+4.2}$ $[+1.8, +6.6]$ & $+0.1$ $[-2.2, +2.3]$ \\
\midrule
Avg         & $\mathbf{+5.0}$ $[+3.3, +6.7]$ & $\mathbf{+5.2}$ $[+3.6, +6.9]$ & $\mathbf{+5.3}$ $[+3.4, +7.2]$ & $-0.2$ $[-2.2, +1.8]$ \\
\bottomrule
\end{tabular}
\caption{Paired-bootstrap avg@8 margins (percentage points) with $95\%$ intervals from $10{,}000$ problem-level resamples. Positive values favor the first method named; bold intervals exclude zero.}
\label{tab:s-ci}
\end{table}

\section{Case Studies}
\label{app:casestudy}

\newcommand{\tka}[1]{\colorbox{black!18}{\small\ttfamily\strut #1}}
\newcommand{\tkb}[1]{\colorbox{black!7}{\small\ttfamily\strut #1}}
\newcommand{\tsep}{\hspace{-0.4pt}{\color{white}\vrule width 0.8pt}\hspace{-0.4pt}}
\newcommand{\ska}[1]{\colorbox{bestbg}{\small\ttfamily\strut #1}}
\newcommand{\skb}[1]{\colorbox{blockbg}{\small\ttfamily\strut #1}}

Three case studies ground the paper's central mechanisms in raw material from the GLM-Z1-9B pair. Cases~1 and~2 read the release files of the two models. Case~3 draws on the collapse probe of \Cref{sec:collapse}: $128$ code generations per checkpoint at the rollout temperature, graded by the probe harness behind \Cref{fig:collapse}c.

\paragraph{Case 1: a spanning token at every reasoning boundary.}
Every thinking-mode response closes its reasoning span with the same bytes. The two release tokenizers read those bytes differently. Shading marks the token boundaries; each cell sits under exactly the bytes it covers:

\begin{center}
\setlength{\fboxsep}{0pt}%
\renewcommand{\arraystretch}{1.9}
\begin{tabular}{@{}r@{\qquad}l@{}}
\small Bytes &
  \small\ttfamily
  </think>\textbackslash n\textbackslash n\textasciigrave\textasciigrave\textasciigrave{}python\textbackslash n \\
\small Teacher (GLM-Z1-9B) &
  \tka{</}\tsep\tkb{think}\tsep\tka{>\textbackslash n\textbackslash n}\tsep\tkb{\textasciigrave\textasciigrave\textasciigrave}\tsep\tka{python}\tsep\tkb{\textbackslash n} \\
\small Student (Qwen3.5-2B) &
  \ska{</think>}\tsep\skb{\textbackslash n\textbackslash n}\tsep\ska{\textasciigrave\textasciigrave\textasciigrave}\tsep\skb{python}\tsep\ska{\textbackslash n} \\
\end{tabular}
\end{center}

The student token \texttt{</think>} covers the teacher tokens \texttt{</} and \texttt{think} entirely, plus the first byte of the third. This is the spanning case of \Cref{sec:mismatch} with residual prefix $\mathbf{r}=\texttt{>}$. The scatter of \Cref{eq:target} alone would give \texttt{</think>} near-zero mass at every reasoning boundary. The chain correction of \Cref{eq:chain} supplies the teacher's realized-path probability instead. On ordinary code and prose lines the two segmentations mostly coincide; the divergence concentrates at special tokens and whitespace.

\paragraph{Case 2: the teacher's stop token is not its declared \texttt{eos}.}
The stop extension and the bridge of \Cref{eq:stop} rest on the stop sets $\mathcal{S}^\rmt$ and $\mathcal{S}^\rms$. The release files of the pair show what a single declared field would miss:

\begin{center}
\small
\begin{tabular}{@{}lll@{}}
\toprule
 & Teacher (GLM-Z1-9B) & Student (Qwen3.5-2B) \\
\midrule
Declared \texttt{eos} field & \texttt{<|endoftext|>} & \texttt{<|im\_end|>} \\
Turn end in the chat template & \texttt{<|user|>} & \texttt{<|im\_end|>} \\
Collected stop set &
  \makecell[l]{\texttt{<|endoftext|>}, \texttt{<|user|>},\\
               \texttt{<|observation|>}} &
  \makecell[l]{\texttt{<|im\_end|>},\\ \texttt{<|endoftext|>}} \\
\bottomrule
\end{tabular}
\end{center}

The student's field is truthful: Qwen3.5 closes every turn with its declared \texttt{<|im\_end|>}. The teacher's field is not. The GLM-Z1 chat template appends no end token to the assistant turn; in a rendered conversation the assistant's content runs directly into the \texttt{<|user|>} that opens the next turn. The declared \texttt{<|endoftext|>} never appears there. A collector that trusts the \texttt{eos} field alone waits for a token the teacher does not emit. The teacher's stopping signal never arrives. The three-source collection of \Cref{sec:objective} recovers all three teacher ids, $\phi$ maps each of them to the student's realized stop token $s^*$, and the bridge of \Cref{eq:stop} sums the teacher's stopping mass over the full set.

\paragraph{Case 3: the whitespace collapse in one problem.}
The two \method{}-JSD arms of \Cref{fig:collapse} differ only in the whitespace-row mask. At step $149$, on one dynamic-programming problem from the probe, the unmasked arm generates:

\begin{casebox}{\method{}-JSD without the whitespace-row mask, step 149}
def robot_climb_ways(steps: int) -> int:
     if steps == 1:
          return 1
     if steps == 2:
          return 2
       current = 0
        prev2 = 1
        prev1 = 2
         for _ in range(steps - 2):
              current = prev1 + prev2
              prev2 = prev1
               prev1 = current
          return prev1
\end{casebox}

The algorithm is correct. The indentation drifts from five spaces to seven, eight, and nine, and the block fails with an \texttt{IndentationError} at line $6$. Across the probe, $108$ of $128$ blocks at this checkpoint fail the same way ($84\%$, the terminal point of \Cref{fig:collapse}c). The masked arm at the same step answers the same problem with:

\begin{casebox}{\method{}-JSD with the whitespace-row mask, step 149}
def robot_climb_ways(steps: int) -> int:
    """
    :param steps: An integer representing the number of steps to the top.
    :return: An integer representing the number of distinct ways to reach
             the top.
    """
    if steps <= 2:
        return steps if steps == 1 else 2
    a, b = 1, 2
    for _ in range(3, steps + 1):
        a, b = b, a + b
    return b
\end{casebox}

Four-space indentation holds throughout, the block compiles, and the tests pass. Across the probe, the masked arm produces zero indentation-broken blocks. Both excerpts are verbatim generations from the two checkpoints.

\end{document}